\documentclass[conference]{IEEEtran}
\IEEEoverridecommandlockouts
\usepackage{booktabs}
\usepackage{cite}
\usepackage{url}
\usepackage{tikz}
\usetikzlibrary{positioning, calc}
\usepackage{multirow}
\usepackage{amsmath,amssymb,amsfonts}
\usepackage{amsthm}

\usepackage{algorithmic}
\usepackage{graphicx}
\usepackage{textcomp}
\usepackage{xcolor}
\usepackage{stfloats}
\usepackage{subcaption}
\newcommand{\qPTQ}[2]{\texttt{Q#1\_#2}}
\newcommand{\BF}{\texttt{BF16}\xspace}
\usepackage{enumitem}
\usepackage{csquotes}
\usepackage{hyperref}
\newtheorem{theorem}{Theorem}

\usepackage[table]{xcolor}
\usepackage{xspace}

\usepackage{amsmath}
\usepackage{amssymb}
\usepackage{mathtools}
\usepackage{amsthm}

\usepackage{multirow}

\usepackage[most]{tcolorbox}
\usepackage{tabularx}
\usepackage{makecell}
\usepackage{array}
\newcommand{\eg}{{\it e.g.,~}}

\def\BibTeX{{\rm B\kern-.05em{\sc i\kern-.025em b}\kern-.08em
    T\kern-.1667em\lower.7ex\hbox{E}\kern-.125emX}}
\begin{document}

\title{Accuracy is Not Enough: A Divergence-Based Approach to Evaluate Fidelity Loss in Quantized LLMs}

\author{
\IEEEauthorblockN{
Shahzeb Qamar\textsuperscript{1,2,3,4}, 
Lorenz Sparrenberg\textsuperscript{3,4}, 
Christian Bauckhage\textsuperscript{1,2,3,4}, \\
Baha Rababah\textsuperscript{5}, 
Carson Leung\textsuperscript{5},
Murat Kantarcioglu\textsuperscript{7},
Cuneyt Gurcan Akcora\textsuperscript{6}, 
Rafet Sifa\textsuperscript{1,3,4}
}
\thanks{
This work was funded by the Deutsche Forschungsgemeinschaft (DFG, German Research Foundation) under Germany's Excellence Strategy---EXC 2070 Grant No. 390732324 (Cluster of Excellence PhenoRob) and by the State of North Rhine-Westphalia through the Lamarr Institute for Machine Learning and Artificial Intelligence.
}
\IEEEauthorblockA{
\textsuperscript{1}Fraunhofer IAIS, Germany (\{shahzeb.qamar, rafet.sifa, christian.bauckhage\}[at]iais.fraunhofer.de) \\
\textsuperscript{2}PhenoRob Cluster of Excellence, University of Bonn, Germany \\
\textsuperscript{3}Lamarr Institute for Machine Learning and Artificial Intelligence, Germany \\
\textsuperscript{4}University of Bonn, Germany (lsparren[at]uni-bonn.de) \\
\textsuperscript{5}University of Manitoba, Canada (rababahb[at]myUManitoba.ca, Carson.Leung[at]UManitoba.ca) \\
\textsuperscript{6}University of Central Florida, USA (cuneyt.akcora[at]ucf.edu) \\
\textsuperscript{7}Virginia Tech, USA (muratk[at]vt.edu)
}
}
\maketitle

\begin{abstract}
Deployment of Large Language Models (LLMs) on memory-constrained edge devices relies heavily on aggressive post-training quantization. However, the evaluation of these quantized models is still largely based on zero-shot task accuracy, which depends solely on argmax predictions and is therefore insensitive to changes in the underlying predictive distribution. As a result, accuracy can exhibit unstable and non-monotonic behavior under progressive quantization, masking substantial fidelity loss relative to the BFloat16 (\BF) uncompressed base model and potentially providing misleading signals for deployment decisions.

In this work, we introduce a distribution-sensitive evaluation framework that quantifies information loss in quantized LLMs as the divergence between full-vocabulary predictive distributions at the token decision boundary. Specifically, we compute statistical distances, including Jensen-Shannon Divergence and Total Variation Distance, between the outputs of full-precision and quantized models, enabling a fine-grained analysis of distributional shift.

Using this framework, we quantify probability mass displacement and distributional drift relative to the \BF reference. These measures capture changes in the predictive distribution that may not be reflected in top-1 task accuracy. We conduct a 120-run experimental matrix across five foundation model architectures and four reasoning benchmarks under progressive quantization regimes, from uncompressed \BF to \qPTQ{2}{K}, providing a systematic fidelity analysis.

Our results show that divergence metrics generally increase under stronger quantization in the evaluated settings, complementing task accuracy with a fidelity signal relative to the \BF reference. Across the tested llama.cpp quantization schemes, mixed-precision \qPTQ{4}{K} generally yields lower divergence than uniform \qPTQ{4}{0} at similar memory footprints. These findings motivate distribution-aware evaluation as a practical diagnostic complement to task accuracy; they do not directly establish correctness, calibration, safety, or user-perceived quality.
\end{abstract}
\begin{IEEEkeywords}
LLM Quantization, Model Compression, Post-Training Quantization,
Probability Leakage, Evaluation Metrics
\end{IEEEkeywords}
\begin{figure}[t]
    \centering
    \includegraphics[width=\columnwidth]{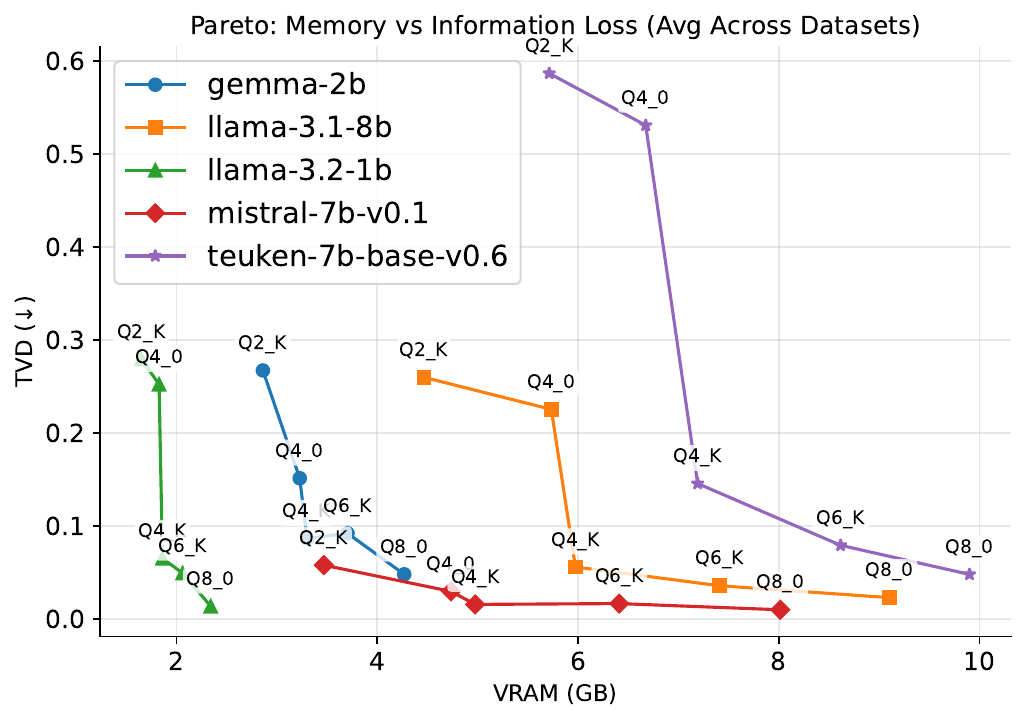} %
    \caption{VRAM requirement versus average TVD in the evaluated settings. The plot illustrates that more aggressive compression is associated with increased distributional divergence from the \BF reference, with \qPTQ{4}{K} providing a favorable empirical trade-off for the tested models and quantization schemes.}
    \label{fig:pareto_frontier}
\end{figure}
\section{Introduction}
\label{intro}
The rapid scaling of Large Language Models (LLMs) is fundamentally bottlenecked by the strict memory constraints of edge devices \cite{team2024llama, jiang2023mistral7b, gemmateam2024gemma2improvingopen}. To bridge this gap, Post-Training Quantization (PTQ) serves as the standard for model compression \cite{frantar2023gptq, lin2024awq, xiao2023smoothquant, chee2023quip}, with frameworks like \texttt{llama.cpp} \cite{gerganov2023llama} popularizing mixed-precision schemes (e.g., K-quants) \cite{sparrenberg2025small}. Despite these advancements, standard evaluation methodologies remain flawed~\cite{gong2024llm}. Industry benchmarks heavily rely on zero-shot task accuracy (i.e., evaluation without task-specific demonstrations) (\eg MMLU~\cite{hendrycks2020measuring}, BoolQ~\cite{clark2019boolq}, PIQA~\cite{bisk2020piqa}, HELM~\cite{liang2023holistic}). Because accuracy evaluates only whether the predicted \textit{argmax} token matches the ground truth, it reduces model behavior to a coarse binary outcome and discards the underlying predictive distribution. Consequently, a degraded low-bit model might guess the correct token purely by chance from a flattened distribution, or default to a dominant prior token, scoring artificially high without true task understanding. This masking effect creates a misleading signal of model reliability~\cite{huang2024good}.

Building on Dutta et al.~\cite{dutta2024accuracy}, who showed that stable aggregate accuracy can hide severe internal prediction \textquote{flips}~\cite{hooker2019compressed}, we argue that quantized LLM evaluation must move from discrete \textit{argmax} outcomes to continuous, full-vocabulary probability distributions. Standard benchmarks often treat LLMs as deterministic classifiers, yet the \textit{argmax} operation discards most of the model's uncertainty information and can score a confident \BF model and a degraded quantized model as equivalent~\cite{schaeffer2023emergent, kadavath2022language}. This concern is consistent with evidence from computer vision~\cite{guo2017calibration}, adversarial robustness~\cite{szegedy2014intriguing}, and medical AI~\cite{begoli2019need}, where accuracy alone has proven insufficient for reliability assessment. Foundational LLM work has likewise shown that confidence and calibration are necessary for reliable deployment~\cite{kadavath2022language}.
\begin{table*}[t]
\centering
\caption{Comparison of the Proposed Framework with Recent LLM Evaluation Literature}
\label{tab:related_works_comparison}
\renewcommand{\arraystretch}{1.2}
\begin{tabular}{@{}l l c c l@{}}
\toprule
\textbf{Work} & \textbf{Evaluation Focus} & \textbf{Distributional Analysis?} & \textbf{Quantization Focus} & \textbf{Primary Metric(s)} \\
\midrule
Gong et al. (2023) \cite{gong2024llm} & General PTQ Benchmarking & No (Top-1 Argmax) & Uniform & Zero-Shot Accuracy \\
Huang et al. (2024) \cite{huang2024good} & Low-bit Degradation Trends & No (Top-1 Argmax) & Uniform & Accuracy \& Perplexity \\
Dutta et al. (2024) \cite{dutta2024accuracy} & Prediction Instability & Partial (KL-Divergence) & Uniform & Flip Rate \& KL-Div. \\
Kübler et al. (2026) \cite{kubler2026llms} & Statistical Significance & No (Binary Sample-Level) & Uniform \& Mixed & McNemar's Test \\
\midrule
\textbf{Proposed Framework} & \textbf{Structural Distributional Drift} & \textbf{Yes (Full Vocabulary)} & \textbf{Uniform \& Mixed} & \textbf{TVD \& JSD} \\
\bottomrule
\end{tabular}
\end{table*}

Drawing on these cross-disciplinary principles, we investigate whether aggressive quantization induces distributional drift relative to the \BF reference model even when task accuracy remains stable. To quantify this drift, we use Total Variation Distance (TVD)~\cite{cover2006elements} and Jensen-Shannon Divergence (JSD)~\cite{lin1991divergence}, computed over the full vocabulary. These metrics measure reference-distribution fidelity; they do not by themselves establish correctness, calibration, safety, or downstream utility. 

Moving beyond binary accuracy, our approach provides distribution-level fidelity signals under stronger compression (Figure~\ref{fig:pareto_frontier}). Our four main contributions are: (1) we show that task accuracy can mask substantial distributional differences relative to a \BF reference; (2) we introduce a full-vocabulary fidelity analysis using bounded TVD and JSD, supported by a coarse-graining result; (3) we evaluate $120$ configurations across five foundation models, four benchmarks, and six quantization regimes; and (4) for the tested llama.cpp schemes, we find that mixed-precision \qPTQ{4}{K} generally yields lower divergence than uniform \qPTQ{4}{0} at similar memory footprints.

\section{Related Work}
\subsection{LLM Compression and the Precision Bottleneck}
To mitigate the latency and privacy bottlenecks of cloud-centric LLMs~\cite{brown2020language, llama2023llama, carlini2021extracting, mireshghallah2020privacy}, Post-Training Quantization (PTQ) has become a standard approach for deploying large models on VRAM-constrained edge devices~\cite{chen2026device, sparrenberg2025small}. Unlike Quantization-Aware Training, PTQ avoids costly retraining while reducing model memory through lower-precision weights and activations \cite{zhu2024survey, nagel2020up}. This massive parameter reduction facilitates viable on-device local inference for latency-sensitive and privacy-critical applications, such as real-time critical error detection in machine translation \cite{chopra2025small}. While PTQ methods reduce memory cost enough to make edge deployment feasible~\cite{frantar2023gptq, lin2024awq}, they do not guarantee that the quantized model preserves the full predictive distribution of its \BF reference. Recently, open-source deployment frameworks have popularized mixed-precision schemes such as K-quants. Unlike legacy uniform quantization (\eg \qPTQ{4}{0}), which applies the same low-bit format across layers, mixed-precision quantization preserves higher precision (\eg 6-bit) in sensitive components while compressing more robust feed-forward layers more aggressively~\cite{sparrenberg2025small}. These methods address the hardware bottleneck, but evaluating whether quantized models retain predictive fidelity remains unresolved.

\subsection{The Illusion of Accuracy and Distributional Divergence}
The prevailing paradigm for evaluating quantized LLMs relies on discrete, zero-shot accuracy \cite{gong2024llm}. However, top-1 accuracy can create statistical mirages that obfuscate continuous model behaviors \cite{schaeffer2023emergent}. Huang et al. \cite{huang2024good} observed that low-bit models exhibit erratic, non-monotonic behavior on reasoning tasks. Critically, Dutta et al. \cite{dutta2024accuracy} highlighted that aggregate accuracy frequently masks massive internal prediction instability, leading to \textquote{flips} in previously correct answers. 

While Dutta et al.~\cite{dutta2024accuracy} exposed this instability through discrete prediction flips and KL divergence, their distributional analysis inherits KL divergence's sensitivity to near-zero probabilities, which are common in extreme low-bit quantization. Similarly, Kübler et al.~\cite{kubler2026llms} rigorously quantified model degradation using McNemar's test, and Rababah et~al.~\cite{rababah2026illusion} characterized behavioral divergences through sample-level correctness agreement. However, these analyses remain fundamentally tied to binary correct/incorrect outcomes rather than the continuous underlying predictive structure.
 
Because accuracy depends only on the model's top (\textit{argmax}) prediction, it disregards the full probability distribution, frequently masking significant behavioral shifts occurring beneath the surface. This structural limitation is extensively documented in broader deep learning literature. For instance, Guo et al.~\cite{guo2017calibration} demonstrated that modern neural networks can maintain high task accuracy while becoming severely uncalibrated. Similarly, adversarial robustness literature reveals that model vulnerabilities are often detected not by immediate accuracy drops, but by measuring divergence in the underlying softmax distributions \cite{szegedy2014intriguing}. Furthermore, Kadavath et al.~\cite{kadavath2022language} recently emphasized that evaluating discrete accuracy without assessing an LLM's internal confidence calibration leads to brittle, unreliable deployments. This vulnerability extends directly to model compression; for instance, Xu et al.~\cite{xu2026alignment} recently demonstrated that quantization can silently destroy a model's safety alignment while leaving traditional accuracy and perplexity metrics effectively unchanged. Recognizing this, recent efforts have actively begun exploring uncertainty-aware and calibration-focused evaluations tailored specifically for low-bit quantized LLMs \cite{sparrenberg2025towards}.  Consequently, ensuring the reliability of edge-deployed compressed models necessitates moving beyond binary outcomes.  
To address the blind spots of discrete task-level evaluation, we introduce a continuous framework to quantify probability mass displacement relative to the \BF reference. This analysis complements discrete task metrics but does not define a universal deployment threshold. 

\section{Background and Preliminaries}
\subsection{Uniform vs. Mixed-Precision Quantization}
In post-training quantization, the method of bit-allocation drastically impacts the structural preservation of the model. Traditional legacy approaches (\eg \qPTQ{4}{0}) utilize \textit{uniform quantization}, wherein every weight tensor across the entire network is rigidly compressed to the same bit-width (\eg 4-bit). While computationally simple, uniform quantization often destroys highly sensitive outlier weights and degrades critical attention mechanisms \cite{lin2024awq, xiao2023smoothquant}. 
\begin{figure*}
    \centering
    \includegraphics[width=0.8\linewidth]{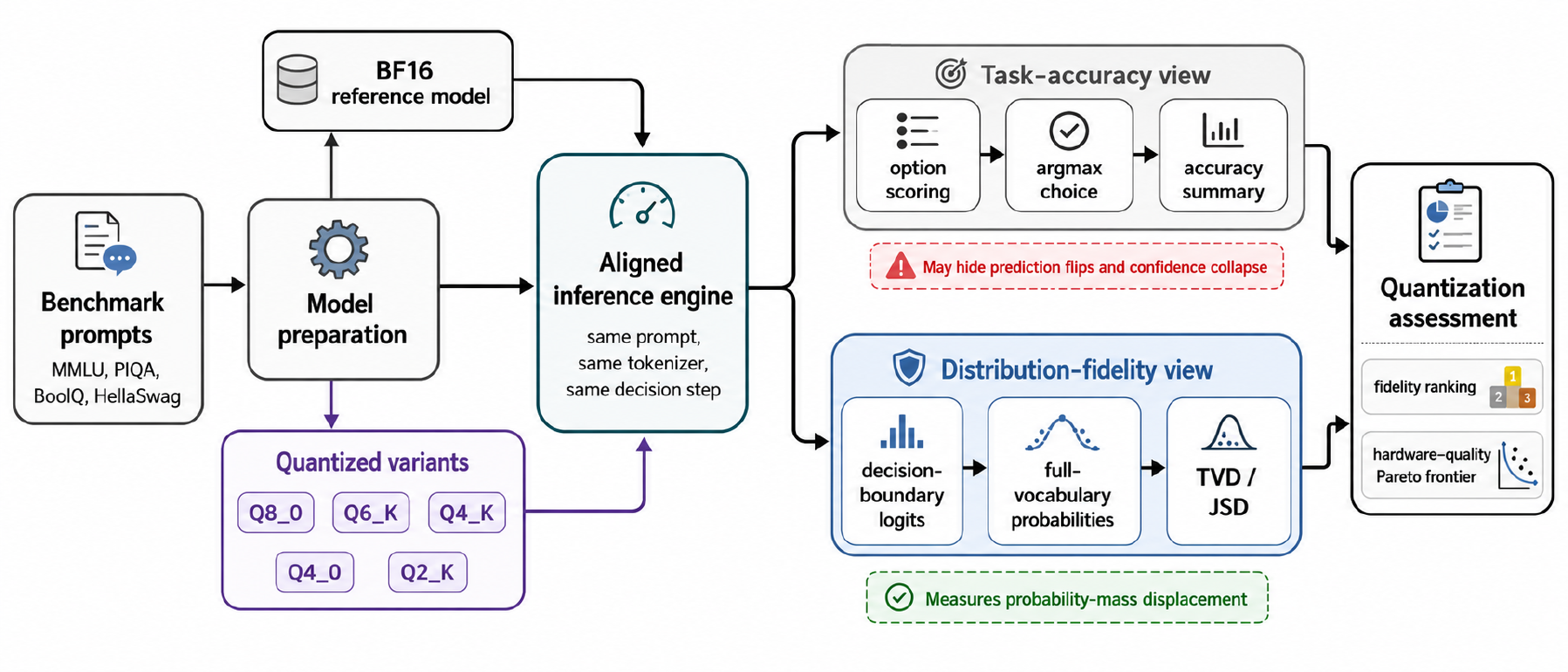}
   \caption{Evaluation framework for quantized LLMs. The pipeline compares \BF and quantized models through aligned inference, contrasting standard task accuracy with distribution-level fidelity metrics such as TVD and JSD.}
    \label{fig:approach}
\end{figure*}
Conversely, modern \texttt{llama.cpp} implementations utilize K-quantization (\eg \qPTQ{4}{K}), a sophisticated \textit{mixed-precision} heuristic \cite{sparrenberg2025small}. To fully understand its advantage, it is essential to contrast it with legacy uniform quantization. Legacy methods rely on basic block quantization, assigning a full-precision (\BF)
scale and offset to every small block of weights (\eg 32 parameters), which introduces significant memory bandwidth overhead. In contrast, K-quants implement a highly efficient two-step ``double quantization'' architecture. They group multiple standard blocks together into a larger ``super-block.'' The scaling factors for the individual constituent blocks are themselves quantized down to 8-bit integers, and a single set of FP16 super-scales is stored for the entire super-block to dequantize them. This two-level scheme slashes metadata overhead and heavily optimizes CPU cache line reads.

Furthermore, rather than enforcing a blanket bit-width across the network, K-quants profile the architecture to distribute precision dynamically. They assign higher precision (\eg 6-bit) to mathematically sensitive structures, such as attention layers and final output tensors, while aggressively compressing the robust, over-parameterized feed-forward networks down to 3-bit or 4-bit. 

\subsection{Full-Vocabulary Probability Distribution}

Traditional task-accuracy metrics evaluate LLMs primarily based on whether the correct candidate option is selected, often using scores derived from the Negative Log-Likelihood (NLL) over a small subset of candidate answers. In contrast, to quantify structural information loss, we evaluate the model's predictive uncertainty across its entire vocabulary $\mathcal{V}$ at the immediate decision boundary.

Let $f_{\theta}$ denote the uncompressed reference LLM with parameters $\theta$, and let $f_{\hat{\theta}}$ denote its quantized counterpart obtained through post-training quantization. Both models share the same tokenizer and vocabulary $\mathcal{V}$, differing only in parameter precision. Given an input prompt sequence $X$, both models produce next-token logit vectors, $\mathbf{z} \in \mathbb{R}^{|\mathcal{V}|}$, over $\mathcal{V}$. 

This logit vector is mapped to a conditional probability distribution using the softmax function:

$$ p(v_i \mid X) = \frac{\exp(z_i)}{\sum_{j=1}^{|\mathcal{V}|} \exp(z_j)} $$

where $p(v_i \mid X)$ represents the model's scalar probability (confidence) for the $i$-th token $v_i \in \mathcal{V}$, conditioned explicitly on the input prompt $X$. Specifically, let $P$ denote the full predictive distribution produced by the uncompressed base model $f_{\theta}$, and let $Q$ denote the corresponding predictive distribution produced by the quantized model $f_{\hat{\theta}}$.

\subsection{Statistical Divergence Metrics}

To measure probability mass displacement, where quantization noise redistributes probability mass away from the uncompressed $f_{\theta}$ model's reference distribution, we employ two distinct statistical divergence metrics: Jensen-Shannon Divergence (JSD) and Total Variation Distance (TVD). 

We select TVD and JSD because both are bounded and symmetric, whereas KL divergence is unbounded and can be dominated by low-probability mismatches. TVD provides a direct probability-mass interpretation, while JSD provides a bounded information-theoretic comparison \cite{guo2017calibration, szegedy2014intriguing}. We use these metrics to quantify distributional drift relative to the \BF reference that may not be visible through zero-shot accuracy alone.

\textbf{Jensen-Shannon Divergence (JSD)}~\cite{lin1991divergence}: To capture information-theoretic divergence stably, JSD evaluates the divergence of both distributions from their smoothed average mixture $M = \frac{1}{2}(P + Q)$:
$$ JSD(P \parallel Q) = \frac{1}{2} D_{KL}(P \parallel M) + \frac{1}{2} D_{KL}(Q \parallel M) $$
where $D_{KL}(\cdot \parallel \cdot)$ denotes the standard Kullback-Leibler divergence. When computed using the base-2 logarithm, JSD is strictly bounded within the interval $[0, 1]$. In the context of our evaluation, a JSD of $0$ indicates that the quantized model's predictive distribution perfectly matches the uncompressed baseline (zero fidelity loss).
Conversely, a JSD approaching $1$ indicates nearly disjoint predictive distributions and therefore severe divergence from the \BF reference.

\textbf{Total Variation Distance (TVD)}~\cite{cover2006elements}: While JSD provides an information-theoretic comparison, TVD measures the absolute difference between distributions. TVD equals the minimum fraction of probability mass that must be reassigned to transform the base distribution $P$ into the quantized distribution $Q$. For discrete probability distributions over a vocabulary $\mathcal{V}$, TVD is defined mathematically as half the $L_1$ distance:

\begin{equation}
\text{TVD}(P, Q) = \frac{1}{2} \sum_{v \in \mathcal{V}} |P(v) - Q(v)| 
\end{equation}

Because TVD is strictly bounded within $[0, 1]$, it allows for direct physical interpretation. 
For example, a TVD of $0.25$ means that $25\%$ of probability mass must be reassigned to transform one distribution into the other. Together, these metrics quantify fidelity loss relative to the \BF reference caused by quantization.
\section{Methodology and Experimental Setup}
\subsection{Design Rationale and Dual-Stream Pipeline}
To operationalize our information-theoretic framework, we must physically capture the probability mass displaced by quantization noise. Because standard accuracy metrics rely on a lossy, non-invertible \textit{argmax} function, they are blind to shifts in the underlying uncertainty manifold. We compute statistical divergence between full-vocabulary softmax distributions to quantify fidelity to the \BF reference that zero-shot accuracy may conceal. 
To rigorously quantify this phenomenon, we design a dual-stream evaluation pipeline. Stream 1 computes zero-shot task accuracy by leveraging native Key-Value caching and selecting the candidate option that minimizes the standard length-normalized negative log-likelihood (NLL) utilized by established benchmarks~\cite{liang2023holistic}:
$\text{NLL}(Y) = - \frac{1}{|Y|} \sum_{i=1}^{|Y|} \log P(y_i \mid X, y_{<i})$.

For each prompt, Stream 2 evaluates the prompt, extracts the backend-provided full-vocabulary output vector, and applies a temperature-$1.0$ softmax without distribution truncation. We then compare the \BF and quantized distributions using TVD and JSD. Thus, Stream 2 is a prompt-conditioned full-vocabulary fidelity diagnostic. For multi-token answer options, NLL scores the complete candidate sequence, whereas TVD and JSD characterize the extracted prompt-level distribution; the two streams are therefore complementary rather than equivalent task measures. Answer-level aggregation maps the vocabulary to a smaller output space, so the following theorem formalizes that such coarse-graining cannot increase distributional discrepancy.

\begin{theorem}
Let $K(a \mid v)$ be a stochastic coarse-graining operator mapping a
full-vocabulary distribution over $\mathcal{V}$ to an answer-level
distribution over $\mathcal{A}$. Then, for all
$P, Q \in \Delta(\mathcal{V})$,
\begin{align}
\mathrm{TVD}(KP, KQ) &\leq \mathrm{TVD}(P, Q), \nonumber\\
\mathrm{JSD}(KP, KQ) &\leq \mathrm{JSD}(P, Q).
\end{align}
\end{theorem}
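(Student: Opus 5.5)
We will prove the two inequalities separately. Both follow from the fact that $K$ is a Markov kernel: $(KP)(a) = \sum_{v \in \mathcal{V}} K(a \mid v) P(v)$, where $K(a\mid v) \ge 0$ and $\sum_{a \in \mathcal{A}} K(a \mid v) = 1$ for every $v$. Two consequences will be used repeatedly. First, $KP \in \Delta(\mathcal{A})$. Second, $K$ is linear, so $K\bigl(\tfrac12(P+Q)\bigr) = \tfrac12(KP+KQ)$; that is, $K$ maps the mixture $M$ to the mixture of the images.

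\textbf{TVD.} This is a direct computation. We write $\mathrm{TVD}(KP,KQ) = \tfrac12\sum_a \bigl|\sum_v K(a\mid v)(P(v)-Q(v))\bigr|$. By the triangle inequality and $K(a\mid v)\ge 0$, this is at most $\tfrac12 \sum_a \sum_v K(a\mid v)\,|P(v)-Q(v)|$. Exchanging the finite sums and using $\sum_a K(a\mid v)=1$ gives $\tfrac12\sum_v |P(v)-Q(v)| = \mathrm{TVD}(P,Q)$.

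\textbf{JSD.} We first reduce the claim to the data-processing inequality for KL divergence: for any $R,S \in \Delta(\mathcal{V})$, $D_{KL}(KR\parallel KS) \le D_{KL}(R\parallel S)$. Apply it with $(R,S)=(P,M)$ and with $(R,S)=(Q,M)$. Since $KM = \tfrac12(KP+KQ)$ is exactly the mixture appearing in $\mathrm{JSD}(KP,KQ)$, averaging the two inequalities with weights $\tfrac12$ yields $\mathrm{JSD}(KP,KQ)\le \mathrm{JSD}(P,Q)$. The logarithm base plays no role.

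The main step is the KL data-processing inequality itself, which we prove via the log-sum inequality. Fix $a$ and set $a_v = K(a\mid v)R(v)$ and $b_v = K(a\mid v)S(v)$. The log-sum inequality gives
\begin{equation*}
(KR)(a)\log\frac{(KR)(a)}{(KS)(a)} \le \sum_v K(a\mid v) R(v)\log\frac{R(v)}{S(v)}.
\end{equation*}
Summing over $a$ and using $\sum_a K(a\mid v)=1$ produces $D_{KL}(R\parallel S)$ on the right-hand side.

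The only delicate point is the zero-probability conventions, namely $0\log 0 = 0$ and $0\log(0/b)=0$. In our application $S=M$, and $M(v)>0$ whenever $P(v)>0$ or $Q(v)>0$. Hence every term that appears is finite and the log-sum inequality applies termwise without any infinities. Alternatively, the inequality is immediate from joint convexity of KL, or from the fact that JSD is an $f$-divergence, to which the general data-processing theorem applies.
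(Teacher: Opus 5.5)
Your proposal is correct and follows the paper's proof essentially step for step: the same triangle-inequality argument for TVD, and for JSD the same per-$a$ log-sum-inequality proof of $D_{KL}(KP\|KM)\le D_{KL}(P\|M)$ (and likewise for $Q$), combined with $KM=\tfrac12(KP+KQ)$ and averaging. Your explicit handling of zero-probability terms, using $M(v)>0$ wherever $P(v)>0$ or $Q(v)>0$, is a small addition that the paper leaves implicit.
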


\begin{proof} For TVD, we use $ TVD(P,Q)=\frac12\sum_v |P(v)-Q(v)|. $ Then \begin{align*} 2TVD(KP,KQ) &= \sum_a | \sum_v K(a| v)(P(v)-Q(v)) | \\ &\le \sum_a \sum_v K(a| v)|P(v)-Q(v)| \\ &= \sum_v |P(v)-Q(v)| \sum_a K(a| v) \\ &= \sum_v |P(v)-Q(v)| = 2\,TVD(P,Q)\end{align*} since $ \sum_a K(a| v)=1 $ for every $v$. Dividing by $2$ gives $ TVD(KP,KQ)\le TVD(P,Q).$ We give the proof for JSD in App.~\ref{sec:theorems}  
\end{proof}

\subsection{Quantization, Prompting, and Flip Analysis}
To guarantee that measured divergence stems strictly from precision truncation, all quantized models are  generated directly from their \BF Hugging Face baselines using \texttt{llama.cpp}. To objectively map the degradation curve, we evaluate five distinct compression states: 8-bit uniform (\qPTQ{8}{0}), 6-bit mixed (\qPTQ{6}{K}), 4-bit uniform (\qPTQ{4}{0}), 4-bit mixed (\qPTQ{4}{K}), and 2-bit mixed (\qPTQ{2}{K}). To control prompt-template variation, we enforce fixed continuous-generation formatting templates (\eg \texttt{Question: \{question\} \textbackslash nAnswer:}) to bypass instruction-tuning chats. Furthermore, adapting the methodology of Dutta et al. \cite{dutta2024accuracy}, we track discrete prediction ``total flips'' (the aggregate sum of both \textit{Flip-to-Wrong} and \textit{Flip-to-Correct} transitions) between the \BF and quantized states to expose the overall instability of task-level accuracy.
\subsection{Metrics} 
We use TVD and JSD as fidelity diagnostics relative to the \BF reference and compare their trends with task accuracy and prediction flips. Our objective is to determine whether distributional divergence distinguishes quantization schemes that appear similar under conventional task metrics. While both TVD and JSD are continuously tracked (as illustrated in Figure~\ref{fig:4panel_divergence}), we prioritize TVD as the primary reporting metric in our tabular results because it has a direct interpretation as the fraction of probability mass that would need to be reassigned to transform one predictive distribution into the other.

\subsection{Evaluated Models, Datasets, and Hardware}
We evaluate five diverse foundation models targeted for edge deployment: LLaMA-3.1-8B \cite{team2024llama}, LLaMA-3.2-1B, Mistral-7B-v0.1 \cite{jiang2023mistral7b}, Teuken-7B-base-v0.6 \cite{ali2025teuken}, and Gemma-2B \cite{gemmateam2024gemma2improvingopen}. To ensure domain diversity, we utilize four established zero-shot benchmarks: MMLU, BoolQ, PIQA, and CausalBench, with full dataset statistics detailed in Table~\ref{tab:dataset_stats}. Inference is executed deterministically (\texttt{seed=42}) on a single NVIDIA V100 32GB GPU, constrained to a 2048 context window, with dynamic VRAM allocation tracked via standard driver queries.

\begin{table}[t]
\centering
\caption{
Evaluation benchmarks used in the zero-shot experiments.}
\label{tab:dataset_stats}
\footnotesize
\setlength{\tabcolsep}{4pt}
\renewcommand{\arraystretch}{1.1}
\begin{tabular}{@{}l r l l@{}}
\toprule
\textbf{Dataset} & \textbf{Entries} & \textbf{Task Type} & \textbf{Split} \\
\midrule
MMLU~\cite{hendrycks2020measuring} & 1,530 & Multidomain QA & Validation \\
BoolQ~\cite{clark2019boolq} & 3,270 & Reading comprehension & Validation \\
PIQA~\cite{bisk2020piqa} & 1,838 & Physical commonsense & Validation \\
CausalBench~\cite{wang2024causalbench} & 1,500 & Causal reasoning & Stratified subset \\
\bottomrule
\end{tabular}
\vspace{-5px}
\end{table}
\section{Results and Discussion}

\subsection{The Masking Effect of Task Accuracy}

\begin{table*}[t]
\centering
\caption{Impact of Quantization on Accuracy, Prediction Stability, and Information Loss Across Reasoning (PIQA) and Knowledge (MMLU) Domains.}
\label{tab:main_results_comprehensive}
\renewcommand{\arraystretch}{1.15}
\resizebox{\linewidth}{!}{
\begin{tabular}{@{}l|r|ccc|ccc@{}}
\toprule
\multirow{2}{*}{\textbf{Model \& Quantization}} & \multirow{2}{*}{\textbf{VRAM}} & \multicolumn{3}{c|}{\textbf{PIQA (Language Reasoning)}} & \multicolumn{3}{c}{\textbf{MMLU (Factual Knowledge)}} \\ \cmidrule(l){3-8} 
 &  & \textbf{Accuracy($\uparrow$)} & \textbf{\% Flips from \BF} & \textbf{TVD($\downarrow$)} & \textbf{Accuracy($\uparrow$)} & \textbf{\% Flips from \BF} & \textbf{TVD($\downarrow$)} \\ \midrule

Teuken-7B-base-v0.6 (\BF) & $14.8$ GB & $68.9\%$ & --- & --- & $\mathbf{31.1\%}$ & --- & --- \\
Teuken-7B-base-v0.6 \qPTQ{4}{K} (Mixed) & $6.6$ GB & $\mathbf{69.2\%}$ & $\mathbf{3.43\%}$  & $\mathbf{0.136}$ & $30.3\%$ & $\mathbf{3.92\%}$ & $\mathbf{0.143}$ \\
Teuken-7B-base-v0.6 \qPTQ{4}{0} (Uniform) & $6.1$ GB & $68.7\%$ & $4.84\%$  & $0.560$ & $30.3\%$ & $4.58\%$ & $0.541$ \\ \midrule

LLaMA-3.1-8B (\BF) & $14.8$ GB & $\mathbf{77.3\%}$ & --- & --- & $32.1\%$ & --- & --- \\
LLaMA-3.1-8B \qPTQ{4}{K} (Mixed) & $5.7$ GB & $77.1\%$ & $\mathbf{2.72\%}$  & $\mathbf{0.052}$ & $\mathbf{32.2\%}$ & $\mathbf{2.48\%}$ & $\mathbf{0.055}$ \\
LLaMA-3.1-8B \qPTQ{4}{0} (Uniform) & $5.5$ GB & $77.2\%$ & $4.08\%$  & $0.215$ & $32.1\%$ & $2.94\%$ & $0.221$ \\ \midrule

Mistral-7B-v0.1 (\BF) & $14.0$ GB & $\mathbf{78.8\%}$ & --- & --- & $\mathbf{33.5\%}$ & --- & --- \\
Mistral-7B-v0.1 \qPTQ{4}{K} (Mixed) & $4.9$ GB & $77.6\%$ & $\mathbf{3.26\%}$  & $\mathbf{0.009}$ & $32.8\%$ & $\mathbf{3.46\%}$ & $\mathbf{0.015}$ \\
Mistral-7B-v0.1 \qPTQ{4}{0} (Uniform) & $4.6$ GB & $76.9\%$ & $4.08\%$  & $0.030$ & $31.9\%$ & $6.08\%$ & $0.030$ \\ 
\bottomrule
\end{tabular}}
\end{table*}

A central hypothesis of this study is that task-level accuracy metrics are insufficient for characterizing fidelity to the \BF reference in heavily quantized models. Our empirical results in Table~\ref{tab:main_results_comprehensive} immediately reveal a critical masking effect, wherein task accuracy remains artificially stable despite severe degradation in the model's internal predictive behavior. For example, when observing the Teuken-7B-base-v0.6 model compressed to 4-bit uniform precision (\qPTQ{4}{0}) on the PIQA dataset, zero-shot accuracy drops by a negligible $0.2\%$ (from $68.9\%$ to $68.7\%$). However, the same setting yields $4.84\%$ prediction flips and a TVD of $0.560$, meaning that transforming its next-token distribution into the \BF reference distribution requires reassigning $56\%$ of probability mass. This substantial reference-distribution divergence is not reflected by raw accuracy alone.

This phenomenon, and the danger of relying solely on accuracy, is most pronounced in the 1B and 2B parameter architectures (\textit{Gemma-2B} and \textit{LLaMA-3.2-1B}). As detailed in App. Table \ref{tab:app_causalbench}, when evaluated on the \textit{CausalBench} dataset, both models achieve an accuracy of approximately $50.1\%$ across nearly all quantization thresholds, from the uncompressed \BF baseline down to 4-bit mixed precision (\qPTQ{4}{K}). Evaluated in a vacuum, this metric suggests stable, albeit baseline, performance. However, the confusion matrix reveals Precision and Recall scores of exactly $0.0$ with zero prediction flips. The models predict the negative class uniformly in this evaluation setup. While these discrete metrics expose this degenerate prediction pattern, they do not quantify the associated divergence from the \BF predictive distribution.

A similar majority-class prediction pattern occurs on the \textit{BoolQ} benchmark (see App. Table \ref{tab:app_boolq}). Across all evaluated quantization levels (down to \qPTQ{2}{K}), both the 1B and 2B models achieve an exact accuracy of $62.2\%$ with zero prediction flips. While this successfully matches the majority class distribution of the validation split, both models yield a Precision of $0.6217$ and a Recall of exactly $1.0$. This indicates that these models predict the \textquote{True} token for all evaluated examples. This behavior is consistent with majority-class prediction, although the present analysis does not establish its cause.

Because these models produce the same class prediction across quantization states, prediction flips can remain zero despite distributional differences. In this setting, top-1 accuracy alone does not reveal the underlying prediction pattern. Precision, recall, and TVD provide complementary information: the former characterize class-level behavior, while TVD quantifies divergence from the \BF next-token distribution.

\begin{figure*}[t]
    \centering
    \includegraphics[width=0.8\textwidth]{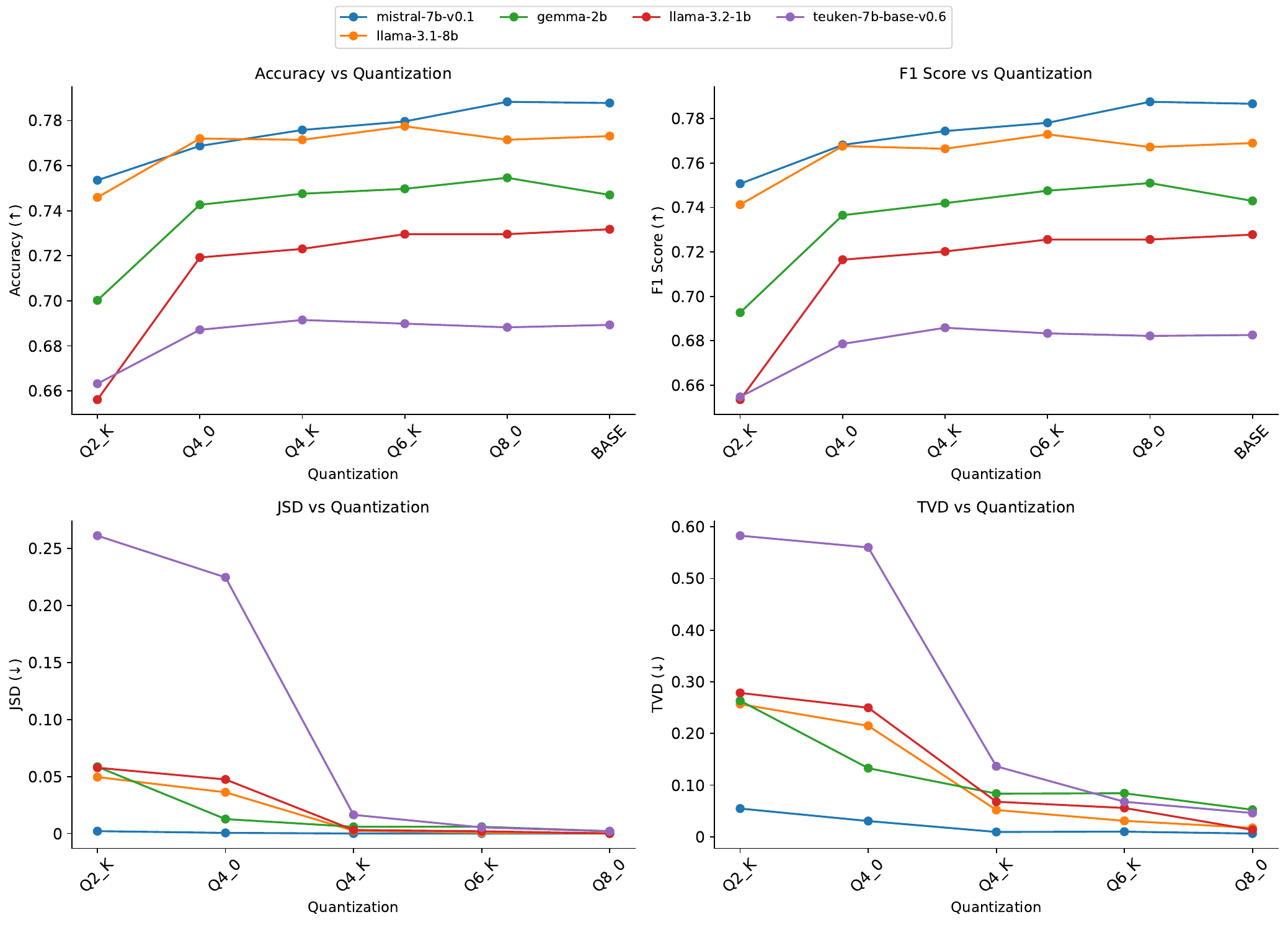} 
    \caption{Comparative analysis of task metrics (top) and divergence metrics (bottom) across quantization levels on PIQA. Accuracy and F1 can exhibit non-monotonic fluctuations, whereas TVD and JSD provide distribution-level fidelity measurements relative to the \BF reference that generally increase under stronger quantization in these experiments.}
    \label{fig:4panel_divergence}
\end{figure*}

\subsection{Quantifying Information Loss via TVD}
To complement task accuracy, we analyze the full-vocabulary softmax distributions of the models. We calculate TVD and JSD to quantify next-token distributional divergence from the \BF reference induced by quantization.
As illustrated in Figure \ref{fig:4panel_divergence}, the contrast between task-level and distribution-level metrics yields one of our most interesting results. The upper panels (Accuracy and F1 Score) exhibit erratic, non-monotonic behavior across quantization levels. 
For example, the accuracy of \textit{LLaMA-3.1-8B} increases slightly at the \qPTQ{6}{K} threshold relative to its uncompressed baseline before decreasing at lower precisions. This non-monotonic behavior illustrates why accuracy alone may be insufficient to characterize reference-distribution fidelity.

The lower panels of Figure \ref{fig:4panel_divergence} show that TVD and JSD generally increase as quantization becomes more aggressive in the evaluated PIQA settings. Because TVD measures distance from the \BF next-token distribution rather than agreement with class labels, it is not directly affected by class imbalance in the same way as accuracy. It nevertheless measures reference fidelity rather than task correctness or calibration.

TVD is particularly interpretable because it equals the fraction of probability mass that must be reassigned to transform one predictive distribution into the other. For example, when compressing \textit{LLaMA-3.1-8B} to \qPTQ{2}{K} on PIQA, accuracy remains $74.6\%$, compared with $77.3\%$ for \BF, while TVD is $0.257$. Thus, despite a moderate accuracy change, transforming the quantized next-token distribution into the \BF reference requires reassigning $25.7\%$ of probability mass.
\subsection{Mixed-Precision vs. Legacy Uniform Quantization}
We use TVD to compare the reference-distribution fidelity of modern mixed-precision K-quants (\qPTQ{4}{K}) and legacy uniform quantization (\qPTQ{4}{0}) in the evaluated settings.

When evaluating these two algorithms purely through the lens of deployment resources and task metrics, they appear practically identical. As detailed in Table \ref{tab:main_results_comprehensive}, compressing the \textit{LLaMA-3.1-8B} architecture via \qPTQ{4}{0} and \qPTQ{4}{K} yields nearly indistinguishable VRAM footprints (approx. $5.5$ GB) and essentially identical zero-shot accuracy scores on both the PIQA benchmark ($77.2\%$ vs. $77.1\%$) and the MMLU benchmark ($32.1\%$ vs. $32.2\%$). 
 
\begin{figure}[!htb]
    \centering
    \begin{subfigure}{0.48\linewidth}
        \centering
        \includegraphics[width=\linewidth]{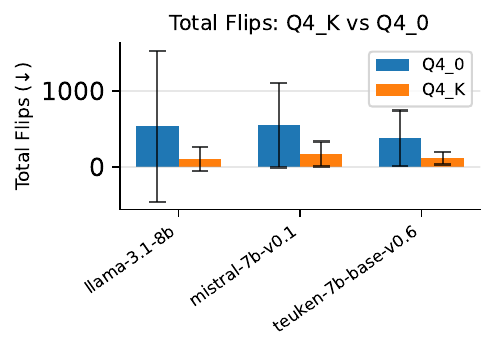}
    \end{subfigure} 
    \begin{subfigure}{0.48\linewidth}
        \centering
        \includegraphics[width=\linewidth]{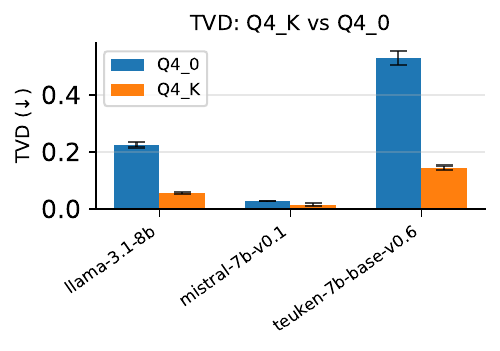}
    \end{subfigure}
     \caption{Comparison of legacy uniform quantization (\qPTQ{4}{0}) versus mixed-precision (\qPTQ{4}{K}) averaged across the evaluation suite in \textbf{a)} flips and \textbf{b)} TVD. Error bars denote the standard deviation across datasets. 
     }
    \label{fig:q4_showdown}
\end{figure}

However, Figure \ref{fig:q4_showdown} shows substantial differences in distributional fidelity and flip behavior. In the evaluated settings, \qPTQ{4}{K} exhibits lower average divergence, whereas \qPTQ{4}{0} shows greater variation in prediction flips across datasets.

For \textit{LLaMA-3.1-8B} on MMLU (Table \ref{tab:main_results_comprehensive}), \qPTQ{4}{K} yields a TVD of $0.055$, whereas \qPTQ{4}{0} yields $0.221$, approximately four times higher, despite similar task accuracy. For \textit{Teuken-7B}, the corresponding TVDs are $0.143$ for \qPTQ{4}{K} and $0.541$ for \qPTQ{4}{0}. These results indicate substantially lower fidelity to the \BF reference for uniform quantization in these evaluated cases.

These results show that TVD distinguishes the two quantization schemes even when their memory footprints and task accuracies are similar. For the tested models, benchmarks, and llama.cpp schemes, mixed-precision quantization generally provides greater fidelity to the \BF reference than uniform quantization.
\subsection{The Hardware Pareto Frontier}
Figure \ref{fig:pareto_frontier} relates VRAM requirements to TVD relative to the \BF reference in the evaluated settings. The plot shows that lower-memory quantization states are generally associated with higher divergence, although the magnitude varies across architectures and quantization schemes.

For example, \textit{LLaMA-3.1-8B} is reduced from 15.2 GB in \BF to 5.27 GB with \qPTQ{4}{K}, while its reported TVD remains below $0.055$ in the corresponding evaluation. Moving from \qPTQ{4}{K} to \qPTQ{4}{0} yields comparatively small memory reductions but substantially higher TVD for several models. More aggressive states, such as \qPTQ{2}{K}, show larger divergence in some architectures.

Accordingly, \qPTQ{4}{K} is a favorable empirical memory--fidelity trade-off among the tested models, benchmarks, and llama.cpp quantization schemes. This observation does not constitute a universal deployment threshold and does not substitute for latency, throughput, energy, calibration, or generation-quality evaluation on target devices.
\section{Conclusion and Future Work}

For quantized LLMs, zero-shot accuracy alone may not characterize fidelity to an uncompressed reference model. We therefore evaluate full-vocabulary predictive distributions using bounded JSD and TVD. JSD provides a bounded information-theoretic comparison, while TVD quantifies the probability mass that must be reassigned to transform the quantized distribution into the \BF reference distribution. These metrics measure reference fidelity rather than correctness, calibration, safety, or user-perceived quality.

Across the tested models, benchmarks, and llama.cpp quantization schemes, uniform \qPTQ{4}{0} often yields substantially higher TVD than mixed-precision \qPTQ{4}{K} despite similar task accuracy and memory footprints. Thus, \qPTQ{4}{K} provides a favorable empirical memory--fidelity trade-off in our setting, including up to a $65\%$ memory reduction for the reported LLaMA-3.1-8B configuration. This result is not a universal safe-deployment threshold.

\textbf{Future Work:} Our study is limited to base models, fixed prompt templates, prompt-conditioned full-vocabulary distributions, and llama.cpp/GGUF quantization. Future work should evaluate multi-step generation in instruction-tuned and RLHF-aligned architectures; compare AWQ, GPTQ, and SmoothQuant; assess prompt-template sensitivity; examine the computational scalability of full-vocabulary divergence evaluation; and measure latency, throughput, energy, calibration, and generation quality on target hardware.
\section*{Acknowledgment}
The authors acknowledge support from the PhenoRob Cluster of Excellence and the Lamarr Institute for Machine Learning and Artificial Intelligence. We thank G. Gerganov and the open-source community for developing and maintaining \texttt{llama.cpp} and \texttt{ggml}, which supported this work. ChatGPT \cite{openai2023gpt} and Gemini \cite{team2023gemini} were used for code optimization and language proofreading.

\bibliographystyle{unsrt}
\bibliography{references}

@article{team2024llama,
  title={The llama 3 herd of models},
  author={Grattafiori, Aaron and Dubey, Abhimanyu and Jauhri, Abhinav and Pandey, Abhinav and Kadian, Abhishek and Al-Dahle, Ahmad and Letman, Aiesha and Mathur, Akhil and Schelten, Alan and Vaughan, Alex and others},
  journal={arXiv preprint arXiv:2407.21783},
  year={2024}
}

@article{jiang2023mistral7b,
  author       = {Albert Q. Jiang and
                  Alexandre Sablayrolles and
                  Arthur Mensch and
                  Chris Bamford and
                  Devendra Singh Chaplot and
                  Diego de Las Casas and
                  Florian Bressand and
                  Gianna Lengyel and
                  Guillaume Lample and
                  Lucile Saulnier and
                  L{\'{e}}lio Renard Lavaud and
                  Marie{-}Anne Lachaux and
                  Pierre Stock and
                  Teven Le Scao and
                  Thibaut Lavril and
                  Thomas Wang and
                  Timoth{\'{e}}e Lacroix and
                  William El Sayed},
  title        = {Mistral 7B},
  journal      = {CoRR},
  volume       = {abs/2310.06825},
  year         = {2023},
  url          = {https://doi.org/10.48550/arXiv.2310.06825},
  doi          = {10.48550/ARXIV.2310.06825},
  eprinttype   = {arXiv},
  eprint       = {2310.06825}
}

@article{gemmateam2024gemma2improvingopen,
  title={Gemma 2: Improving open language models at a practical size},
  author={Gemma Team},
  url = {https://doi.org/10.48550/arXiv.2408.00118},
  journal={arXiv preprint arXiv:2408.00118},
  year={2024}
}

@inproceedings{
frantar2023gptq,
title={{OPTQ}: Accurate Quantization for Generative Pre-trained Transformers},
author={Elias Frantar and Saleh Ashkboos and Torsten Hoefler and Dan Alistarh},
booktitle={The Eleventh International Conference on Learning Representations },
year={2023},
url={https://openreview.net/forum?id=tcbBPnfwxS}
}

@article{lin2024awq,
  title={AWQ: Activation-aware weight quantization for on-device LLM compression and acceleration},
  author={Lin, Ji and Tang, Jiaming and Tang, Haotian and Yang, Shang and Xiao, Guangxuan and Han, Song},
  journal={GetMobile: Mobile Computing and Communications},
  volume={28},
  number={4},
  pages={12--17},
  year={2025},
  publisher={ACM New York, NY, USA}
}

@inproceedings{xiao2023smoothquant,
  author       = {Guangxuan Xiao and
                  Ji Lin and
                  Micka{\"{e}}l Seznec and
                  Hao Wu and
                  Julien Demouth and
                  Song Han},
  title        = {SmoothQuant: Accurate and Efficient Post-Training Quantization for
                  Large Language Models},
  booktitle    = {International Conference on Machine Learning, {ICML} 2023, 23-29 July
                  2023, Honolulu, Hawaii, {USA}},
  series       = {Proceedings of Machine Learning Research},
  volume       = {202},
  pages        = {38087--38099},
  publisher    = {{PMLR}},
  year         = {2023},
  url          = {https://proceedings.mlr.press/v202/xiao23c.html}
}

@inproceedings{chee2023quip,
  author       = {Jerry Chee and
                  Yaohui Cai and
                  Volodymyr Kuleshov and
                  Christopher De Sa},
  title        = {QuIP: 2-Bit Quantization of Large Language Models With Guarantees},
  booktitle    = {Advances in Neural Information Processing Systems 36: Annual Conference
                  on Neural Information Processing Systems 2023, NeurIPS 2023, New Orleans,
                  LA, USA, December 10 - 16, 2023},
  year         = {2023},
  url          = {http://papers.nips.cc/paper\_files/paper/2023/hash/0df38cd13520747e1e64e5b123a78ef8-Abstract-Conference.html}
}

@inproceedings{sparrenberg2025small,
  author       = {Lorenz Sparrenberg and
                  Tobias Deu{\ss}er and
                  Armin Berger and
                  Rafet Sifa},
  title        = {Small and Fast LLMs on Commodity Hardware: Post-Training Quantization
                  in llama. cpp},
  booktitle    = {12th {IEEE} International Conference on Data Science and Advanced
                  Analytics, {DSAA} 2025, Birmingham, United Kingdom, October 9-12,
                  2025},
  pages        = {1--10},
  publisher    = {{IEEE}},
  year         = {2025},
  url          = {https://doi.org/10.1109/DSAA65442.2025.11247985},
  doi          = {10.1109/DSAA65442.2025.11247985}
}

@article{gong2024llm,
  author       = {Ruihao Gong and
                  Yang Yong and
                  Shiqiao Gu and
                  Yushi Huang and
                  Yunchen Zhang and
                  Xianglong Liu and
                  Dacheng Tao},
  title        = {LLM-QBench: {A} Benchmark Towards the Best Practice for Post-training
                  Quantization of Large Language Models},
  journal      = {CoRR},
  volume       = {abs/2405.06001},
  year         = {2024},
  url          = {https://doi.org/10.48550/arXiv.2405.06001},
  doi          = {10.48550/ARXIV.2405.06001},
  eprinttype   = {arXiv},
  eprint       = {2405.06001}
}

@inproceedings{hendrycks2020measuring,
  author       = {Dan Hendrycks and
                  Collin Burns and
                  Steven Basart and
                  Andy Zou and
                  Mantas Mazeika and
                  Dawn Song and
                  Jacob Steinhardt},
  title        = {Measuring Massive Multitask Language Understanding},
  booktitle    = {9th International Conference on Learning Representations, {ICLR} 2021,
                  Virtual Event, Austria, May 3-7, 2021},
  publisher    = {OpenReview.net},
  year         = {2021},
  url          = {https://openreview.net/forum?id=d7KBjmI3GmQ}
}

@inproceedings{clark2019boolq,
  author       = {Christopher Clark and
                  Kenton Lee and
                  Ming{-}Wei Chang and
                  Tom Kwiatkowski and
                  Michael Collins and
                  Kristina Toutanova},
  title        = {BoolQ: Exploring the Surprising Difficulty of Natural Yes/No Questions},
  booktitle    = {Proceedings of the 2019 Conference of the North American Chapter of
                  the Association for Computational Linguistics: Human Language Technologies,
                  {NAACL-HLT} 2019, Minneapolis, MN, USA, June 2-7, 2019, Volume 1 (Long
                  and Short Papers)},
  pages        = {2924--2936},
  publisher    = {Association for Computational Linguistics},
  year         = {2019},
  url          = {https://doi.org/10.18653/v1/n19-1300},
  doi          = {10.18653/V1/N19-1300}
}

@inproceedings{bisk2020piqa,
  author       = {Yonatan Bisk and
                  Rowan Zellers and
                  Ronan Le Bras and
                  Jianfeng Gao and
                  Yejin Choi},
  title        = {{PIQA:} Reasoning about Physical Commonsense in Natural Language},
  booktitle    = {The Thirty-Fourth {AAAI} Conference on Artificial Intelligence, {AAAI}
                  2020, The Thirty-Second Innovative Applications of Artificial Intelligence
                  Conference, {IAAI} 2020, The Tenth {AAAI} Symposium on Educational
                  Advances in Artificial Intelligence, {EAAI} 2020, New York, NY, USA,
                  February 7-12, 2020},
  pages        = {7432--7439},
  publisher    = {{AAAI} Press},
  year         = {2020},
  url          = {https://doi.org/10.1609/aaai.v34i05.6239},
  doi          = {10.1609/AAAI.V34I05.6239}
}

@article{liang2023holistic,
  author       = {Percy Liang and
                  Rishi Bommasani and
                  Tony Lee and
                  Dimitris Tsipras and
                  Dilara Soylu and
                  Michihiro Yasunaga and
                  Yian Zhang and
                  Deepak Narayanan and
                  Yuhuai Wu and
                  Ananya Kumar and
                  Benjamin Newman and
                  Binhang Yuan and
                  Bobby Yan and
                  Ce Zhang and
                  Christian Cosgrove and
                  Christopher D. Manning and
                  Christopher R{\'{e}} and
                  Diana Acosta{-}Navas and
                  Drew A. Hudson and
                  Eric Zelikman and
                  Esin Durmus and
                  Faisal Ladhak and
                  Frieda Rong and
                  Hongyu Ren and
                  Huaxiu Yao and
                  Jue Wang and
                  Keshav Santhanam and
                  Laurel J. Orr and
                  Lucia Zheng and
                  Mert Y{\"{u}}ksekg{\"{o}}n{\"{u}}l and
                  Mirac Suzgun and
                  Nathan Kim and
                  Neel Guha and
                  Niladri S. Chatterji and
                  Omar Khattab and
                  Peter Henderson and
                  Qian Huang and
                  Ryan Chi and
                  Sang Michael Xie and
                  Shibani Santurkar and
                  Surya Ganguli and
                  Tatsunori Hashimoto and
                  Thomas Icard and
                  Tianyi Zhang and
                  Vishrav Chaudhary and
                  William Wang and
                  Xuechen Li and
                  Yifan Mai and
                  Yuhui Zhang and
                  Yuta Koreeda},
  title        = {Holistic Evaluation of Language Models},
  journal      = {Trans. Mach. Learn. Res.},
  volume       = {2023},
  year         = {2023},
  url          = {https://openreview.net/forum?id=iO4LZibEqW}
}

@article{huang2024good,
  author       = {Wei Huang and
                  Xudong Ma and
                  Haotong Qin and
                  Xingyu Zheng and
                  Chengtao Lv and
                  Hong Chen and
                  Jie Luo and
                  Xiaojuan Qi and
                  Xianglong Liu and
                  Michele Magno},
  title        = {How Good Are Low-bit Quantized LLaMA3 Models? An Empirical Study},
  journal      = {CoRR},
  volume       = {abs/2404.14047},
  year         = {2024},
  url          = {https://doi.org/10.48550/arXiv.2404.14047},
  doi          = {10.48550/ARXIV.2404.14047},
  eprinttype   = {arXiv},
  eprint       = {2404.14047}
}

@inproceedings{dutta2024accuracy,
  author       = {Abhinav Dutta and
                  Sanjeev Krishnan and
                  Nipun Kwatra and
                  Ramachandran Ramjee},
  title        = {Accuracy is Not All You Need},
  booktitle    = {Advances in Neural Information Processing Systems 37: Annual Conference
                  on Neural Information Processing Systems 2024, NeurIPS 2024, Vancouver,
                  BC, Canada, December 10 - 15, 2024},
  year         = {2024},
  url          = {http://papers.nips.cc/paper\_files/paper/2024/hash/e0e956681b04ac126679e8c7dd706b2e-Abstract-Conference.html}
}

@article{kubler2026llms,
  author       = {Jonas M. K{\"{u}}bler and
                  Kailash Budhathoki and
                  Matth{\"{a}}us Kleindessner and
                  Xiong Zhou and
                  Junming Yin and
                  Ashish Khetan and
                  George Karypis},
  title        = {When LLMs get significantly worse: {A} statistical approach to detect
                  model degradations},
  journal      = {CoRR},
  volume       = {abs/2602.10144},
  year         = {2026},
  url          = {https://doi.org/10.48550/arXiv.2602.10144},
  doi          = {10.48550/ARXIV.2602.10144},
  eprinttype   = {arXiv},
  eprint       = {2602.10144}
}

@article{hooker2019compressed,
  title={What do compressed deep neural networks forget?},
  author={Hooker, Sara and Courville, Aaron and Clark, Gregory and Dauphin, Yann and Frome, Andrea},
  journal={arXiv preprint arXiv:1911.05248},
  year={2019}
}

@inproceedings{guo2017calibration,
  author       = {Chuan Guo and
                  Geoff Pleiss and
                  Yu Sun and
                  Kilian Q. Weinberger},
  title        = {On Calibration of Modern Neural Networks},
  booktitle    = {Proceedings of the 34th International Conference on Machine Learning,
                  {ICML} 2017, Sydney, NSW, Australia, 6-11 August 2017},
  series       = {Proceedings of Machine Learning Research},
  volume       = {70},
  pages        = {1321--1330},
  publisher    = {{PMLR}},
  year         = {2017},
  url          = {http://proceedings.mlr.press/v70/guo17a.html}
}

@inproceedings{szegedy2014intriguing,
  author       = {Christian Szegedy and
                  Wojciech Zaremba and
                  Ilya Sutskever and
                  Joan Bruna and
                  Dumitru Erhan and
                  Ian J. Goodfellow and
                  Rob Fergus},
  title        = {Intriguing properties of neural networks},
  booktitle    = {2nd International Conference on Learning Representations, {ICLR} 2014,
                  Banff, AB, Canada, April 14-16, 2014, Conference Track Proceedings},
  year         = {2014},
  url          = {http://arxiv.org/abs/1312.6199}
}

@article{begoli2019need,
  author       = {Edmon Begoli and
                  Tanmoy Bhattacharya and
                  Dimitri Kusnezov},
  title        = {The need for uncertainty quantification in machine-assisted medical
                  decision making},
  journal      = {Nat. Mach. Intell.},
  volume       = {1},
  number       = {1},
  pages        = {20--23},
  year         = {2019},
  url          = {https://doi.org/10.1038/s42256-018-0004-1},
  doi          = {10.1038/S42256-018-0004-1}
}

@article{kadavath2022language,
  author       = {Saurav Kadavath and
                  Tom Conerly and
                  Amanda Askell and
                  Tom Henighan and
                  Dawn Drain and
                  Ethan Perez and
                  Nicholas Schiefer and
                  Zac Hatfield{-}Dodds and
                  Nova DasSarma and
                  Eli Tran{-}Johnson and
                  Scott Johnston and
                  Sheer El Showk and
                  Andy Jones and
                  Nelson Elhage and
                  Tristan Hume and
                  Anna Chen and
                  Yuntao Bai and
                  Sam Bowman and
                  Stanislav Fort and
                  Deep Ganguli and
                  Danny Hernandez and
                  Josh Jacobson and
                  Jackson Kernion and
                  Shauna Kravec and
                  Liane Lovitt and
                  Kamal Ndousse and
                  Catherine Olsson and
                  Sam Ringer and
                  Dario Amodei and
                  Tom Brown and
                  Jack Clark and
                  Nicholas Joseph and
                  Ben Mann and
                  Sam McCandlish and
                  Chris Olah and
                  Jared Kaplan},
  title        = {Language Models (Mostly) Know What They Know},
  journal      = {CoRR},
  volume       = {abs/2207.05221},
  year         = {2022},
  url          = {https://doi.org/10.48550/arXiv.2207.05221},
  doi          = {10.48550/ARXIV.2207.05221},
  eprinttype   = {arXiv},
  eprint       = {2207.05221}
}

@article{lin1991divergence,
  author       = {Jianhua Lin},
  title        = {Divergence measures based on the Shannon entropy},
  journal      = {{IEEE} Trans. Inf. Theory},
  volume       = {37},
  number       = {1},
  pages        = {145--151},
  year         = {1991},
  url          = {https://doi.org/10.1109/18.61115},
  doi          = {10.1109/18.61115}
}

@book{cover2006elements,
  author       = {Thomas M. Cover and
                  Joy A. Thomas},
  title        = {Elements of information theory {(2.} ed.)},
  publisher    = {Wiley},
  year         = {2006},
  url          = {http://www.elementsofinformationtheory.com/},
  isbn         = {978-0-471-24195-9}
}

@inproceedings{brown2020language,
  author       = {Tom B. Brown and
                  Benjamin Mann and
                  Nick Ryder and
                  Melanie Subbiah and
                  Jared Kaplan and
                  Prafulla Dhariwal and
                  Arvind Neelakantan and
                  Pranav Shyam and
                  Girish Sastry and
                  Amanda Askell and
                  Sandhini Agarwal and
                  Ariel Herbert{-}Voss and
                  Gretchen Krueger and
                  Tom Henighan and
                  Rewon Child and
                  Aditya Ramesh and
                  Daniel M. Ziegler and
                  Jeffrey Wu and
                  Clemens Winter and
                  Christopher Hesse and
                  Mark Chen and
                  Eric Sigler and
                  Mateusz Litwin and
                  Scott Gray and
                  Benjamin Chess and
                  Jack Clark and
                  Christopher Berner and
                  Sam McCandlish and
                  Alec Radford and
                  Ilya Sutskever and
                  Dario Amodei},
  title        = {Language Models are Few-Shot Learners},
  booktitle    = {Advances in Neural Information Processing Systems 33: Annual Conference
                  on Neural Information Processing Systems 2020, NeurIPS 2020, December
                  6-12, 2020, virtual},
  year         = {2020},
  url          = {https://proceedings.neurips.cc/paper/2020/hash/1457c0d6bfcb4967418bfb8ac142f64a-Abstract.html}
}

@article{llama2023llama,
  author       = {Hugo Touvron and
                  Thibaut Lavril and
                  Gautier Izacard and
                  Xavier Martinet and
                  Marie{-}Anne Lachaux and
                  Timoth{\'{e}}e Lacroix and
                  Baptiste Rozi{\`{e}}re and
                  Naman Goyal and
                  Eric Hambro and
                  Faisal Azhar and
                  Aur{\'{e}}lien Rodriguez and
                  Armand Joulin and
                  Edouard Grave and
                  Guillaume Lample},
  title        = {LLaMA: Open and Efficient Foundation Language Models},
  journal      = {CoRR},
  volume       = {abs/2302.13971},
  year         = {2023},
  url          = {https://doi.org/10.48550/arXiv.2302.13971},
  doi          = {10.48550/ARXIV.2302.13971},
  eprinttype   = {arXiv},
  eprint       = {2302.13971}
}

@inproceedings{carlini2021extracting,
  author       = {Nicholas Carlini and
                  Florian Tram{\`{e}}r and
                  Eric Wallace and
                  Matthew Jagielski and
                  Ariel Herbert{-}Voss and
                  Katherine Lee and
                  Adam Roberts and
                  Tom B. Brown and
                  Dawn Song and
                  {\'{U}}lfar Erlingsson and
                  Alina Oprea and
                  Colin Raffel},
  title        = {Extracting Training Data from Large Language Models},
  booktitle    = {30th {USENIX} Security Symposium, {USENIX} Security 2021, August 11-13,
                  2021},
  pages        = {2633--2650},
  publisher    = {{USENIX} Association},
  year         = {2021},
  url          = {https://www.usenix.org/conference/usenixsecurity21/presentation/carlini-extracting}
}

@article{mireshghallah2020privacy,
  author       = {Fatemehsadat Mireshghallah and
                  Mohammadkazem Taram and
                  Praneeth Vepakomma and
                  Abhishek Singh and
                  Ramesh Raskar and
                  Hadi Esmaeilzadeh},
  title        = {Privacy in Deep Learning: {A} Survey},
  journal      = {CoRR},
  volume       = {abs/2004.12254},
  year         = {2020},
  url          = {https://arxiv.org/abs/2004.12254},
  eprinttype   = {arXiv},
  eprint       = {2004.12254}
}

@article{chen2026device,
  author       = {Wanyi Chen and
                  Junhao Wang and
                  Yiwei Zhang and
                  Yufan Shi and
                  Tianyi Jiang and
                  Shengxian Zhou and
                  Chenxu Wu and
                  Andi Zhang and
                  Chenyue Zhou and
                  Minxuan Wang and
                  Xinyu Liu and
                  Xiaoshuai Hao and
                  Yinan Wu and
                  Yichen Li and
                  Yuwei Hu and
                  Zhao Cao and
                  Yang Lu and
                  Mengke Li and
                  Yanbiao Ma and
                  Zhiwu Lu and
                  Jungong Han and
                  Yike Guo},
  title        = {On-device large language models: a survey of model compression and
                  system optimization},
  journal      = {Artif. Intell. Rev.},
  volume       = {59},
  number       = {9},
  pages        = {191},
  year         = {2026},
  url          = {https://doi.org/10.1007/s10462-026-11538-1},
  doi          = {10.1007/S10462-026-11538-1}
}

@article{zhu2024survey,
  author       = {Xunyu Zhu and
                  Jian Li and
                  Yong Liu and
                  Can Ma and
                  Weiping Wang},
  title        = {A Survey on Model Compression for Large Language Models},
  journal      = {Trans. Assoc. Comput. Linguistics},
  volume       = {12},
  pages        = {1556--1577},
  year         = {2024},
  url          = {https://doi.org/10.1162/tacl\_a\_00704},
  doi          = {10.1162/TACL\_A\_00704}
}

@inproceedings{nagel2020up,
  author       = {Markus Nagel and
                  Rana Ali Amjad and
                  Mart van Baalen and
                  Christos Louizos and
                  Tijmen Blankevoort},
  title        = {Up or Down? Adaptive Rounding for Post-Training Quantization},
  booktitle    = {Proceedings of the 37th International Conference on Machine Learning,
                  {ICML} 2020, 13-18 July 2020, Virtual Event},
  series       = {Proceedings of Machine Learning Research},
  volume       = {119},
  pages        = {7197--7206},
  publisher    = {{PMLR}},
  year         = {2020},
  url          = {http://proceedings.mlr.press/v119/nagel20a.html}
}

@article{schaeffer2023emergent,
  title={Are emergent abilities of large language models a mirage?},
  author={Schaeffer, Rylan and Miranda, Brando and Koyejo, Sanmi},
  journal={Advances in neural information processing systems},
  volume={36},
  pages={55565--55581},
  year={2023}
}

@inproceedings{ali2025teuken,
  author       = {Mehdi Ali and
                  Michael Fromm and
                  Klaudia Thellmann and
                  Jan Ebert and
                  Alexander Arno Weber and
                  Richard Rutmann and
                  Charvi Jain and
                  Max L{\"{u}}bbering and
                  Daniel Steinigen and
                  Johannes Leveling and
                  Katrin Klug and
                  Jasper Schulze Buschhoff and
                  Lena Jurkschat and
                  Hammam Abdelwahab and
                  Benny J{\"{o}}rg Stein and
                  Karl{-}Heinz Sylla and
                  Pavel Denisov and
                  Nicolo' Brandizzi and
                  Qasid Saleem and
                  Anirban Bhowmick and
                  Lennard Helmer and
                  Chelsea Maria John and
                  Pedro Ortiz Suarez and
                  Malte Ostendorff and
                  Alex Jude and
                  Lalith Manjunath and
                  Samuel Weinbach and
                  Carolin Penke and
                  Oleg Filatov and
                  Fabio Barth and
                  Paramita Mirza and
                  Lucas Weber and
                  Ines Wendler and
                  Rafet Sifa and
                  Fabian K{\"{u}}ch and
                  Andreas Herten and
                  Ren{\'{e}} J{\"{a}}kel and
                  Georg Rehm and
                  Stefan Kesselheim and
                  Joachim K{\"{o}}hler and
                  Nicolas Flores{-}Herr},
  title        = {Teuken-7B-Base {\&} Teuken-7B-Instruct: Towards European LLMs},
  booktitle    = {{ECAI} 2025 - 28th European Conference on Artificial Intelligence,
                  25-30 October 2025, Bologna, Italy - Including 14th Conference on
                  Prestigious Applications of Intelligent Systems {(PAIS} 2025)},
  series       = {Frontiers in Artificial Intelligence and Applications},
  volume       = {413},
  pages        = {4321--4329},
  publisher    = {{IOS} Press},
  year         = {2025},
  url          = {https://doi.org/10.3233/FAIA251328},
  doi          = {10.3233/FAIA251328}
}

@inproceedings{wang2024causalbench,
    title = "{C}ausal{B}ench: A Comprehensive Benchmark for Evaluating Causal Reasoning Capabilities of Large Language Models",
    author = "Wang, Zeyu",
    booktitle = "Proceedings of the 10th SIGHAN Workshop on Chinese Language Processing (SIGHAN-10)",
    month = aug,
    year = "2024",
    address = "Bangkok, Thailand",
    publisher = "Association for Computational Linguistics",
    url = "https://aclanthology.org/2024.sighan-1.17/",
    pages = "143--151",
}

@article{openai2023gpt,
  author       = {OpenAI},
  title        = {{GPT-4} Technical Report},
  journal      = {CoRR},
  volume       = {abs/2303.08774},
  year         = {2023},
  url          = {https://doi.org/10.48550/arXiv.2303.08774},
  doi          = {10.48550/ARXIV.2303.08774},
  eprinttype   = {arXiv},
  eprint       = {2303.08774}
}

@article{team2023gemini,
  author       = {Gemini Team},
  title        = {Gemini: {A} Family of Highly Capable Multimodal Models},
  journal      = {CoRR},
  volume       = {abs/2312.11805},
  year         = {2023},
  url          = {https://doi.org/10.48550/arXiv.2312.11805},
  doi          = {10.48550/ARXIV.2312.11805},
  eprinttype   = {arXiv},
  eprint       = {2312.11805}
}

@misc{gerganov2023llama,
  author       = {Georgi Gerganov and others},
  title        = {llama.cpp: Port of Facebook's LLaMA model in C/C++},
  year         = {2023},
  howpublished = {\url{https://github.com/ggerganov/llama.cpp}},
  note         = {Accessed: Mar. 26, 2026, Commit: ded446b}
}

@inproceedings{chopra2025small,
  author       = {Muskaan Chopra and
                  Lorenz Sparrenberg and
                  Sarthak Khanna and
                  Rafet Sifa},
  title        = {How Small Can You Go? Compact Language Models for On-Device Critical
                  Error Detection in Machine Translation},
  booktitle    = {{IEEE} International Conference on Big Data, BigData 2025, Macau,
                  China, December 8-11, 2025},
  pages        = {5410--5417},
  publisher    = {{IEEE}},
  year         = {2025},
  url          = {https://doi.org/10.1109/BigData66926.2025.11401605},
  doi          = {10.1109/BIGDATA66926.2025.11401605}
}

@inproceedings{sparrenberg2025towards,
  author       = {Lorenz Sparrenberg and
                  Tobias Schneider and
                  Tobias Deu{\ss}er and
                  Armin Berger and
                  Rafet Sifa},
  title        = {Towards Uncertainty-Aware Low-Bit Quantized LLMs for On-Device Inference},
  booktitle    = {{IEEE} International Conference on Big Data, BigData 2025, Macau,
                  China, December 8-11, 2025},
  pages        = {5930--5939},
  publisher    = {{IEEE}},
  year         = {2025},
  url          = {https://doi.org/10.1109/BigData66926.2025.11400739},
  doi          = {10.1109/BIGDATA66926.2025.11400739}
}

@article{rababah2026illusion,
  author       = {Baha Rababah and Shahzeb Qamar and
                  Lorenz Sparrenberg and
                  Rafet Sifa and
                  Murat Kantarcioglu and
                  Cuneyt Gurcan Akcora and
                  Carson K. Leung},
  title        = {The Illusion of Equivalency: Statistical Characterization of Quantization
                  Effects in LLMs},
  journal      = {CoRR},
  volume       = {abs/2607.08734},
  year         = {2026},
  url          = {https://doi.org/10.48550/arXiv.2607.08734},
  doi          = {10.48550/ARXIV.2607.08734},
  eprinttype   = {arXiv},
  eprint       = {2607.08734},
  bibsource    = {dblp computer science bibliography, https://dblp.org}
}

@article{xu2026alignment,
  author       = {Bruce Changlong Xu and
                  Adarsh Kumarappan and
                  Mu Zhou},
  title        = {Alignment Collapse Under {KV} Cache Quantization: Diagnosis and Mitigation},
  journal      = {CoRR},
  volume       = {abs/2606.09864},
  year         = {2026},
  url          = {https://doi.org/10.48550/arXiv.2606.09864},
  doi          = {10.48550/ARXIV.2606.09864},
  eprinttype   = {arXiv},
  eprint       = {2606.09864},
  bibsource    = {dblp computer science bibliography, https://dblp.org}
}
 
\section*{Supplementary Material}

\section{Theorems}
\label{sec:theorems}
\begin{proof}
    For JSD, let $ M=(P+Q)/2, $ so $ KM=(KP+KQ)/2. $ We have \begin{align} JSD(KP,KQ) = \frac12 KL(KP\|KM) + \frac12 KL(KQ\|KM). \end{align}  We show $ KL(KP\|KM)\le KL(P\|M). $ 
    
    For each $a$, apply the log-sum inequality to $ x_v=K(a\mid v)P(v) $ and $ y_v=K(a\mid v)M(v): $ 
    
    \begin{align} \left( \sum_v x_v \right) \log \frac{\sum_v x_v}{\sum_v y_v} \le \sum_v x_v \log \frac{x_v}{y_v}. 
    \end{align}  
    
    Since $ \sum_v x_v=(KP)(a) $ and $ \sum_v y_v=(KM)(a), $ this yields 
    \begin{align} (KP)(a) \log \frac{(KP)(a)}{(KM)(a)} \le \sum_v K(a\mid v)P(v) \log \frac{P(v)}{M(v)}. \end{align}  
    
    Summing over $a$ and using $ \sum_a K(a\mid v)=1 $ gives $ KL(KP\|KM)\le KL(P\|M). $ The same argument with $Q$ in place of $P$ gives $ KL(KQ\|KM)\le KL(Q\|M). $ Averaging the two inequalities proves $ JSD(KP,KQ)\le JSD(P,Q). $
\end{proof}

\section{Complete Empirical Results}
\label{app:complete_results}

This appendix provides the exhaustive quantitative evaluation across all 120 experimental configurations discussed in this study. To ensure readability, the results are categorized by benchmark dataset. Tables \ref{tab:app_mmlu} through \ref{tab:app_causalbench} detail the zero-shot Accuracy, F1 Score, percentage of Prediction Flips, JSD, and TVD for all five foundation models under progressive quantization regimes.

\begin{table*}[htbp]
    \centering

    \begin{minipage}{0.48\linewidth}
        \centering
        \caption{Complete Evaluation Metrics: \textbf{MMLU}}
        \label{tab:app_mmlu}
        \renewcommand{\arraystretch}{1.05}
        \resizebox{\linewidth}{!}{
        \begin{tabular}{@{}l|c|ccccc@{}}
        \toprule
        \textbf{Model \& Quantization} & \textbf{VRAM} & \textbf{Acc ($\uparrow$)} & \textbf{F1 ($\uparrow$)} & \textbf{\% Flips ($\downarrow$)} & \textbf{JSD ($\downarrow$)} & \textbf{TVD ($\downarrow$)} \\ \midrule

        \multicolumn{7}{l}{\textbf{Teuken-7B-base-v0.6}} \\ \midrule
        Base (\BF) & $15.0$ GB & $31.1\%$ & $0.310$ & --- & --- & --- \\
        \qPTQ{8}{0} (Uniform) & $10.4$ GB & $30.8\%$ & $0.308$ & $\mathbf{1.05\%}$ & $\mathbf{0.002}$ & $\mathbf{0.048}$ \\
        \qPTQ{6}{K} (Mixed) & $9.2$ GB & $\mathbf{31.4\%}$ & $\mathbf{0.313}$ & $1.76\%$ & $0.006$ & $0.075$ \\
        \qPTQ{4}{K} (Mixed) & $7.7$ GB & $30.3\%$ & $0.302$ & $3.92\%$ & $0.018$ & $0.143$ \\
        \qPTQ{4}{0} (Uniform) & $7.2$ GB & $30.3\%$ & $0.303$ & $4.58\%$ & $0.213$ & $0.541$ \\
        \qPTQ{2}{K} (Mixed) & $6.3$ GB & $28.0\%$ & $0.279$ & $10.59\%$ & $0.262$ & $0.585$ \\
        \midrule

        \multicolumn{7}{l}{\textbf{LLaMA-3.1-8B}} \\ \midrule
        Base (\BF) & $14.9$ GB & $32.1\%$ & $0.320$ & --- & --- & --- \\
        \qPTQ{8}{0} (Uniform) & $9.4$ GB & $\mathbf{32.3\%}$ & $\mathbf{0.323}$ & $\mathbf{0.65\%}$ & $\mathbf{0.000}$ & $\mathbf{0.021}$ \\
        \qPTQ{6}{K} (Mixed) & $7.7$ GB & $32.2\%$ & $0.322$ & $0.78\%$ & $0.001$ & $0.034$ \\
        \qPTQ{4}{K} (Mixed) & $6.3$ GB & $32.2\%$ & $0.322$ & $2.48\%$ & $0.003$ & $0.055$ \\
        \qPTQ{4}{0} (Uniform) & $6.0$ GB & $32.1\%$ & $0.321$ & $2.94\%$ & $0.038$ & $0.221$ \\
        \qPTQ{2}{K} (Mixed) & $4.8$ GB & $31.2\%$ & $0.312$ & $9.87\%$ & $0.050$ & $0.258$ \\
        \midrule

        \multicolumn{7}{l}{\textbf{Mistral-7B-v0.1}} \\ \midrule
        Base (\BF) & $14.0$ GB & $\mathbf{33.5\%}$ & $\mathbf{0.335}$ & --- & --- & --- \\
        \qPTQ{8}{0} (Uniform) & $8.1$ GB & $33.4\%$ & $0.334$ & $\mathbf{0.33\%}$ & $\mathbf{0.000}$ & $\mathbf{0.009}$ \\
        \qPTQ{6}{K} (Mixed) & $6.5$ GB & $33.0\%$ & $0.329$ & $1.83\%$ & $0.000$ & $0.016$ \\
        \qPTQ{4}{K} (Mixed) & $5.0$ GB & $32.8\%$ & $0.327$ & $3.46\%$ & $0.000$ & $0.015$ \\
        \qPTQ{4}{0} (Uniform) & $4.8$ GB & $31.9\%$ & $0.319$ & $6.08\%$ & $0.001$ & $0.030$ \\
        \qPTQ{2}{K} (Mixed) & $3.5$ GB & $31.3\%$ & $0.313$ & $7.58\%$ & $0.002$ & $0.057$ \\
        \midrule

        \multicolumn{7}{l}{\textbf{Gemma-2B}} \\ \midrule
        Base (\BF) & $5.8$ GB & $\mathbf{29.2\%}$ & $\mathbf{0.292}$ & --- & --- & --- \\
        \qPTQ{8}{0} (Uniform) & $4.6$ GB & $\mathbf{29.2\%}$ & $0.292$ & $\mathbf{0.52\%}$ & $\mathbf{0.002}$ & $\mathbf{0.050}$ \\
        \qPTQ{6}{K} (Mixed) & $4.0$ GB & $29.0\%$ & $0.290$ & $2.16\%$ & $0.006$ & $0.089$ \\
        \qPTQ{4}{K} (Mixed) & $3.6$ GB & $29.1\%$ & $0.291$ & $3.27\%$ & $0.006$ & $0.086$ \\
        \qPTQ{4}{0} (Uniform) & $3.5$ GB & $28.7\%$ & $0.287$ & $5.16\%$ & $0.014$ & $0.143$ \\
        \qPTQ{2}{K} (Mixed) & $3.2$ GB & $27.5\%$ & $0.275$ & $11.50\%$ & $0.061$ & $0.266$ \\
        \midrule

        \multicolumn{7}{l}{\textbf{LLaMA-3.2-1B}} \\ \midrule
        Base (\BF) & $3.1$ GB & $28.8\%$ & $0.288$ & --- & --- & --- \\
        \qPTQ{8}{0} (Uniform) & $2.5$ GB & $\mathbf{28.9\%}$ & $\mathbf{0.289}$ & $\mathbf{0.72\%}$ & $\mathbf{0.000}$ & $\mathbf{0.013}$ \\
        \qPTQ{6}{K} (Mixed) & $2.2$ GB & $\mathbf{28.9\%}$ & $0.289$ & $1.24\%$ & $0.002$ & $0.052$ \\
        \qPTQ{4}{K} (Mixed) & $2.0$ GB & $28.7\%$ & $0.286$ & $3.33\%$ & $0.003$ & $0.066$ \\
        \qPTQ{4}{0} (Uniform) & $2.0$ GB & $28.3\%$ & $0.282$ & $4.90\%$ & $0.048$ & $0.251$ \\
        \qPTQ{2}{K} (Mixed) & $1.8$ GB & $26.9\%$ & $0.269$ & $13.07\%$ & $0.058$ & $0.279$ \\
        \bottomrule
        \end{tabular}}
    \end{minipage}
\hfill
    \begin{minipage}{0.48\linewidth}
        \centering
        \caption{Complete Evaluation Metrics: \textbf{PIQA}}
        \label{tab:app_piqa}
        \renewcommand{\arraystretch}{1.05}
        \resizebox{\linewidth}{!}{
        \begin{tabular}{@{}l|c|ccccc@{}}
        \toprule
        \textbf{Model \& Quantization} & \textbf{VRAM} & \textbf{Acc ($\uparrow$)} & \textbf{F1 ($\uparrow$)} & \textbf{\% Flips ($\downarrow$)} & \textbf{JSD ($\downarrow$)} & \textbf{TVD ($\downarrow$)} \\ \midrule

        \multicolumn{7}{l}{\textbf{Teuken-7B-base-v0.6}} \\ \midrule
        Base (\BF) & $14.8$ GB & $68.9\%$ & $0.683$ & --- & --- & --- \\
        \qPTQ{8}{0} (Uniform) & $8.3$ GB & $68.8\%$ & $0.682$ & $\mathbf{1.47\%}$ & $\mathbf{0.002}$ & $\mathbf{0.046}$ \\
        \qPTQ{6}{K} (Mixed) & $7.0$ GB & $69.0\%$ & $0.683$ & $1.85\%$ & $0.005$ & $0.068$ \\
        \qPTQ{4}{K} (Mixed) & $5.6$ GB & $\mathbf{69.2\%}$ & $\mathbf{0.686}$ & $3.43\%$ & $0.016$ & $0.136$ \\
        \qPTQ{4}{0} (Uniform) & $5.1$ GB & $68.7\%$ & $0.679$ & $4.84\%$ & $0.225$ & $0.560$ \\
        \qPTQ{2}{K} (Mixed) & $4.1$ GB & $66.3\%$ & $0.655$ & $10.83\%$ & $0.261$ & $0.583$ \\
        \midrule

        \multicolumn{7}{l}{\textbf{LLaMA-3.1-8B}} \\ \midrule
        Base (\BF) & $14.8$ GB & $77.3\%$ & $0.769$ & --- & --- & --- \\
        \qPTQ{8}{0} (Uniform) & $8.3$ GB & $77.1\%$ & $0.767$ & $\mathbf{1.09\%}$ & $\mathbf{0.000}$ & $\mathbf{0.017}$ \\
        \qPTQ{6}{K} (Mixed) & $6.6$ GB & $\mathbf{77.7\%}$ & $\mathbf{0.773}$ & $1.58\%$ & $0.001$ & $0.031$ \\
        \qPTQ{4}{K} (Mixed) & $5.2$ GB & $77.1\%$ & $0.766$ & $2.72\%$ & $0.002$ & $0.052$ \\
        \qPTQ{4}{0} (Uniform) & $4.9$ GB & $77.2\%$ & $0.768$ & $4.08\%$ & $0.036$ & $0.215$ \\
        \qPTQ{2}{K} (Mixed) & $3.6$ GB & $74.6\%$ & $0.741$ & $10.72\%$ & $0.050$ & $0.257$ \\
        \midrule

        \multicolumn{7}{l}{\textbf{Mistral-7B-v0.1}} \\ \midrule
        Base (\BF) & $14.0$ GB & $78.8\%$ & $0.787$ & --- & --- & --- \\
        \qPTQ{8}{0} (Uniform) & $7.8$ GB & $\mathbf{78.8\%}$ & $\mathbf{0.788}$ & $\mathbf{1.25\%}$ & $\mathbf{0.000}$ & $\mathbf{0.006}$ \\
        \qPTQ{6}{K} (Mixed) & $6.2$ GB & $78.0\%$ & $0.778$ & $2.01\%$ & $0.000$ & $0.010$ \\
        \qPTQ{4}{K} (Mixed) & $4.8$ GB & $77.6\%$ & $0.774$ & $3.26\%$ & $0.000$ & $0.009$ \\
        \qPTQ{4}{0} (Uniform) & $4.5$ GB & $76.9\%$ & $0.768$ & $4.08\%$ & $0.001$ & $0.030$ \\
        \qPTQ{2}{K} (Mixed) & $3.3$ GB & $75.4\%$ & $0.751$ & $6.91\%$ & $0.002$ & $0.055$ \\
        \midrule

        \multicolumn{7}{l}{\textbf{Gemma-2B}} \\ \midrule
        Base (\BF) & $5.5$ GB & $74.7\%$ & $0.743$ & --- & --- & --- \\
        \qPTQ{8}{0} (Uniform) & $3.4$ GB & $\mathbf{75.5\%}$ & $\mathbf{0.751}$ & $\mathbf{1.96\%}$ & $\mathbf{0.002}$ & $\mathbf{0.052}$ \\
        \qPTQ{6}{K} (Mixed) & $2.8$ GB & $75.0\%$ & $0.748$ & $3.54\%$ & $0.006$ & $0.084$ \\
        \qPTQ{4}{K} (Mixed) & $2.4$ GB & $74.8\%$ & $0.742$ & $5.39\%$ & $0.006$ & $0.083$ \\
        \qPTQ{4}{0} (Uniform) & $2.3$ GB & $74.3\%$ & $0.736$ & $5.66\%$ & $0.013$ & $0.133$ \\
        \qPTQ{2}{K} (Mixed) & $2.0$ GB & $70.0\%$ & $0.693$ & $12.19\%$ & $0.059$ & $0.263$ \\
        \midrule

        \multicolumn{7}{l}{\textbf{LLaMA-3.2-1B}} \\ \midrule
        Base (\BF) & $2.9$ GB & $\mathbf{73.2\%}$ & $\mathbf{0.728}$ & --- & --- & --- \\
        \qPTQ{8}{0} (Uniform) & $1.9$ GB & $73.0\%$ & $0.726$ & $\mathbf{1.25\%}$ & $\mathbf{0.000}$ & $\mathbf{0.013}$ \\
        \qPTQ{6}{K} (Mixed) & $1.6$ GB & $73.0\%$ & $0.726$ & $1.90\%$ & $0.002$ & $0.056$ \\
        \qPTQ{4}{K} (Mixed) & $1.4$ GB & $72.3\%$ & $0.720$ & $4.41\%$ & $0.003$ & $0.068$ \\
        \qPTQ{4}{0} (Uniform) & $1.4$ GB & $71.9\%$ & $0.716$ & $6.31\%$ & $0.047$ & $0.250$ \\
        \qPTQ{2}{K} (Mixed) & $1.2$ GB & $65.6\%$ & $0.654$ & $16.97\%$ & $0.058$ & $0.278$ \\
        \bottomrule
        \end{tabular}}
    \end{minipage}
\end{table*}

\begin{table*}[htbp]
    \centering

    \begin{minipage}{0.48\linewidth}
        \centering
        \caption{Complete Evaluation Metrics: \textbf{BoolQ}}
        \label{tab:app_boolq}
        \renewcommand{\arraystretch}{1.05}
        \resizebox{\linewidth}{!}{
        \begin{tabular}{@{}l|c|ccccc@{}}
        \toprule
        \textbf{Model \& Quantization} & \textbf{VRAM} & \textbf{Acc ($\uparrow$)} & \textbf{F1 ($\uparrow$)} & \textbf{\% Flips ($\downarrow$)} & \textbf{JSD ($\downarrow$)} & \textbf{TVD ($\downarrow$)} \\ \midrule

        \multicolumn{7}{l}{\textbf{Teuken-7B-base-v0.6}} \\ \midrule
        Base (\BF) & $15.0$ GB & $\mathbf{68.7\%}$ & $\mathbf{0.776}$ & --- & --- & --- \\
        \qPTQ{8}{0} (Uniform) & $10.4$ GB & $68.4\%$ & $0.775$ & $\mathbf{1.31\%}$ & $\mathbf{0.002}$ & $\mathbf{0.050}$ \\
        \qPTQ{6}{K} (Mixed) & $9.2$ GB & $68.4\%$ & $0.774$ & $2.20\%$ & $0.006$ & $0.087$ \\
        \qPTQ{4}{K} (Mixed) & $7.7$ GB & $67.8\%$ & $0.760$ & $7.34\%$ & $0.021$ & $0.152$ \\
        \qPTQ{4}{0} (Uniform) & $7.2$ GB & $67.3\%$ & $0.716$ & $25.23\%$ & $0.193$ & $0.510$ \\
        \qPTQ{2}{K} (Mixed) & $6.3$ GB & $61.1\%$ & $0.737$ & $19.05\%$ & $0.265$ & $0.589$ \\
        \midrule

        \multicolumn{7}{l}{\textbf{LLaMA-3.1-8B}} \\ \midrule
        Base (\BF) & $14.9$ GB & $44.3\%$ & $0.380$ & --- & --- & --- \\
        \qPTQ{8}{0} (Uniform) & $9.4$ GB & $44.6\%$ & $0.396$ & $\mathbf{2.32\%}$ & $\mathbf{0.001}$ & $\mathbf{0.027}$ \\
        \qPTQ{6}{K} (Mixed) & $7.7$ GB & $47.6\%$ & $0.482$ & $11.41\%$ & $0.002$ & $0.040$ \\
        \qPTQ{4}{K} (Mixed) & $6.3$ GB & $47.3\%$ & $0.473$ & $10.61\%$ & $0.003$ & $0.059$ \\
        \qPTQ{4}{0} (Uniform) & $6.0$ GB & $59.9\%$ & $0.735$ & $61.74\%$ & $0.042$ & $0.233$ \\
        \qPTQ{2}{K} (Mixed) & $4.8$ GB & $\mathbf{62.2\%}$ & $\mathbf{0.767}$ & $72.29\%$ & $0.051$ & $0.262$ \\
        \midrule

        \multicolumn{7}{l}{\textbf{Mistral-7B-v0.1}} \\ \midrule
        Base (\BF) & $14.0$ GB & $65.6\%$ & $0.747$ & --- & --- & --- \\
        \qPTQ{8}{0} (Uniform) & $8.1$ GB & $65.7\%$ & $0.748$ & $\mathbf{0.98\%}$ & $\mathbf{0.000}$ & $\mathbf{0.013}$ \\
        \qPTQ{6}{K} (Mixed) & $6.5$ GB & $\mathbf{65.8\%}$ & $0.762$ & $7.52\%$ & $0.000$ & $0.022$ \\
        \qPTQ{4}{K} (Mixed) & $5.0$ GB & $65.0\%$ & $0.764$ & $12.35\%$ & $0.000$ & $0.021$ \\
        \qPTQ{4}{0} (Uniform) & $4.8$ GB & $62.2\%$ & $\mathbf{0.767}$ & $25.96\%$ & $0.001$ & $0.029$ \\
        \qPTQ{2}{K} (Mixed) & $3.5$ GB & $62.2\%$ & $\mathbf{0.767}$ & $25.96\%$ & $0.002$ & $0.061$ \\
        \midrule

        \multicolumn{7}{l}{\textbf{Gemma-2B}} \\ \midrule
        Base (\BF) & $5.8$ GB & $\mathbf{62.2\%}$ & $\mathbf{0.767}$ & --- & --- & --- \\
        \qPTQ{8}{0} (Uniform) & $4.6$ GB & $\mathbf{62.2\%}$ & $\mathbf{0.767}$ & $\mathbf{0.00\%}$ & $\mathbf{0.002}$ & $\mathbf{0.045}$ \\
        \qPTQ{6}{K} (Mixed) & $4.0$ GB & $\mathbf{62.2\%}$ & $\mathbf{0.767}$ & $\mathbf{0.00\%}$ & $0.007$ & $0.098$ \\
        \qPTQ{4}{K} (Mixed) & $3.6$ GB & $\mathbf{62.2\%}$ & $\mathbf{0.767}$ & $\mathbf{0.00\%}$ & $0.007$ & $0.089$ \\
        \qPTQ{4}{0} (Uniform) & $3.5$ GB & $\mathbf{62.2\%}$ & $\mathbf{0.767}$ & $\mathbf{0.00\%}$ & $0.018$ & $0.164$ \\
        \qPTQ{2}{K} (Mixed) & $3.2$ GB & $\mathbf{62.2\%}$ & $\mathbf{0.767}$ & $\mathbf{0.00\%}$ & $0.065$ & $0.270$ \\
        \midrule

        \multicolumn{7}{l}{\textbf{LLaMA-3.2-1B}} \\ \midrule
        Base (\BF) & $3.1$ GB & $\mathbf{62.2\%}$ & $\mathbf{0.767}$ & --- & --- & --- \\
        \qPTQ{8}{0} (Uniform) & $2.5$ GB & $\mathbf{62.2\%}$ & $\mathbf{0.767}$ & $\mathbf{0.00\%}$ & $\mathbf{0.000}$ & $\mathbf{0.014}$ \\
        \qPTQ{6}{K} (Mixed) & $2.2$ GB & $\mathbf{62.2\%}$ & $\mathbf{0.767}$ & $\mathbf{0.00\%}$ & $0.001$ & $0.045$ \\
        \qPTQ{4}{K} (Mixed) & $2.0$ GB & $\mathbf{62.2\%}$ & $\mathbf{0.767}$ & $\mathbf{0.00\%}$ & $0.003$ & $0.064$ \\
        \qPTQ{4}{0} (Uniform) & $2.0$ GB & $\mathbf{62.2\%}$ & $\mathbf{0.767}$ & $\mathbf{0.00\%}$ & $0.048$ & $0.254$ \\
        \qPTQ{2}{K} (Mixed) & $1.8$ GB & $\mathbf{62.2\%}$ & $\mathbf{0.767}$ & $\mathbf{0.00\%}$ & $0.058$ & $0.280$ \\
        \bottomrule
        \end{tabular}}
    \end{minipage}
\hfill
    \begin{minipage}{0.48\linewidth}
        \centering
        \caption{Complete Evaluation Metrics: \textbf{CausalBench}}
        \label{tab:app_causalbench}
        \renewcommand{\arraystretch}{1.05}
        \resizebox{\linewidth}{!}{
        \begin{tabular}{@{}l|c|ccccc@{}}
        \toprule
        \textbf{Model \& Quantization} & \textbf{VRAM} & \textbf{Acc ($\uparrow$)} & \textbf{F1 ($\uparrow$)} & \textbf{\% Flips ($\downarrow$)} & \textbf{JSD ($\downarrow$)} & \textbf{TVD ($\downarrow$)} \\ \midrule

        \multicolumn{7}{l}{\textbf{Teuken-7B-base-v0.6}} \\ \midrule
        Base (\BF) & $15.0$ GB & $50.5\%$ & $\mathbf{0.628}$ & --- & --- & --- \\
        \qPTQ{8}{0} (Uniform) & $10.4$ GB & $50.0\%$ & $0.627$ & $\mathbf{3.27\%}$ & $\mathbf{0.002}$ & $\mathbf{0.048}$ \\
        \qPTQ{6}{K} (Mixed) & $9.2$ GB & $51.5\%$ & $0.619$ & $6.60\%$ & $0.006$ & $0.088$ \\
        \qPTQ{4}{K} (Mixed) & $7.7$ GB & $49.9\%$ & $0.608$ & $7.53\%$ & $0.021$ & $0.151$ \\
        \qPTQ{4}{0} (Uniform) & $7.2$ GB & $\mathbf{52.7\%}$ & $0.517$ & $35.13\%$ & $0.193$ & $0.511$ \\
        \qPTQ{2}{K} (Mixed) & $6.3$ GB & $45.9\%$ & $0.550$ & $26.27\%$ & $0.265$ & $0.590$ \\
        \midrule

        \multicolumn{7}{l}{\textbf{LLaMA-3.1-8B}} \\ \midrule
        Base (\BF) & $14.9$ GB & $\mathbf{50.1\%}$ & $\mathbf{0.000}$ & --- & --- & --- \\
        \qPTQ{8}{0} (Uniform) & $9.4$ GB & $\mathbf{50.1\%}$ & $\mathbf{0.000}$ & $\mathbf{0.00\%}$ & $\mathbf{0.001}$ & $\mathbf{0.027}$ \\
        \qPTQ{6}{K} (Mixed) & $7.7$ GB & $\mathbf{50.1\%}$ & $\mathbf{0.000}$ & $\mathbf{0.00\%}$ & $0.002$ & $0.039$ \\
        \qPTQ{4}{K} (Mixed) & $6.2$ GB & $\mathbf{50.1\%}$ & $\mathbf{0.000}$ & $\mathbf{0.00\%}$ & $0.003$ & $0.059$ \\
        \qPTQ{4}{0} (Uniform) & $6.0$ GB & $\mathbf{50.1\%}$ & $\mathbf{0.000}$ & $\mathbf{0.00\%}$ & $0.042$ & $0.233$ \\
        \qPTQ{2}{K} (Mixed) & $4.7$ GB & $\mathbf{50.1\%}$ & $\mathbf{0.000}$ & $\mathbf{0.00\%}$ & $0.051$ & $0.262$ \\
        \midrule

        \multicolumn{7}{l}{\textbf{Mistral-7B-v0.1}} \\ \midrule
        Base (\BF) & $14.0$ GB & $\mathbf{53.7\%}$ & $0.348$ & --- & --- & --- \\
        \qPTQ{8}{0} (Uniform) & $8.1$ GB & $53.7\%$ & $0.337$ & $\mathbf{1.27\%}$ & $\mathbf{0.000}$ & $\mathbf{0.012}$ \\
        \qPTQ{6}{K} (Mixed) & $6.5$ GB & $53.5\%$ & $0.274$ & $8.73\%$ & $0.000$ & $0.019$ \\
        \qPTQ{4}{K} (Mixed) & $5.0$ GB & $51.7\%$ & $0.376$ & $11.47\%$ & $0.000$ & $0.018$ \\
        \qPTQ{4}{0} (Uniform) & $4.8$ GB & $48.9\%$ & $0.661$ & $78.47\%$ & $0.001$ & $0.030$ \\
        \qPTQ{2}{K} (Mixed) & $3.5$ GB & $49.0\%$ & $\mathbf{0.662}$ & $78.47\%$ & $0.002$ & $0.059$ \\
        \midrule

        \multicolumn{7}{l}{\textbf{Gemma-2B}} \\ \midrule
        Base (\BF) & $5.7$ GB & $\mathbf{50.1\%}$ & $0.000$ & --- & --- & --- \\
        \qPTQ{8}{0} (Uniform) & $4.6$ GB & $\mathbf{50.1\%}$ & $0.000$ & $\mathbf{0.00\%}$ & $\mathbf{0.002}$ & $\mathbf{0.045}$ \\
        \qPTQ{6}{K} (Mixed) & $4.0$ GB & $\mathbf{50.1\%}$ & $0.000$ & $\mathbf{0.00\%}$ & $0.007$ & $0.098$ \\
        \qPTQ{4}{K} (Mixed) & $3.6$ GB & $\mathbf{50.1\%}$ & $0.000$ & $\mathbf{0.00\%}$ & $0.007$ & $0.089$ \\
        \qPTQ{4}{0} (Uniform) & $3.5$ GB & $48.9\%$ & $\mathbf{0.660}$ & $98.73\%$ & $0.018$ & $0.166$ \\
        \qPTQ{2}{K} (Mixed) & $3.1$ GB & $48.6\%$ & $0.057$ & $4.60\%$ & $0.066$ & $0.269$ \\
        \midrule

        \multicolumn{7}{l}{\textbf{LLaMA-3.2-1B}} \\ \midrule
        Base (\BF) & $3.0$ GB & $50.1\%$ & $0.000$ & --- & --- & --- \\
        \qPTQ{8}{0} (Uniform) & $2.5$ GB & $50.1\%$ & $0.000$ & $\mathbf{0.00\%}$ & $\mathbf{0.000}$ & $\mathbf{0.014}$ \\
        \qPTQ{6}{K} (Mixed) & $2.2$ GB & $50.1\%$ & $0.000$ & $\mathbf{0.00\%}$ & $0.001$ & $0.044$ \\
        \qPTQ{4}{K} (Mixed) & $2.0$ GB & $50.1\%$ & $0.000$ & $\mathbf{0.00\%}$ & $0.003$ & $0.063$ \\
        \qPTQ{4}{0} (Uniform) & $2.0$ GB & $50.1\%$ & $0.000$ & $\mathbf{0.00\%}$ & $0.049$ & $0.254$ \\
        \qPTQ{2}{K} (Mixed) & $1.8$ GB & $\mathbf{50.2\%}$ & $\mathbf{0.003}$ & $0.07\%$ & $0.058$ & $0.280$ \\
        \bottomrule
        \end{tabular}}
    \end{minipage}
\end{table*}

\vspace{12pt}

\end{document}